\documentclass{article} % For LaTeX2e
\usepackage[preprint]{colm2026_conference}

\usepackage{microtype}
\usepackage{hyperref}
\usepackage{url}
\usepackage{booktabs}
\usepackage{subcaption}
\usepackage{enumitem}

\usepackage{lineno}

\definecolor{darkblue}{rgb}{0, 0, 0.5}
\hypersetup{colorlinks=true, citecolor=darkblue, linkcolor=darkblue, urlcolor=darkblue}

\usepackage{graphicx} % Required for inserting images
\usepackage[utf8]{inputenc} % allow utf-8 input
\usepackage[T1]{fontenc}    % use 8-bit T1 fonts
\usepackage{url}            % simple URL typesetting
\usepackage{booktabs}       % professional-quality tables
\usepackage{amsfonts}       % blackboard math symbols
\usepackage{nicefrac}       % compact symbols for 1/2, etc.
\usepackage{microtype}      % microtypography
\usepackage{xcolor}         % colors
\usepackage{bm}

\usepackage[makeroom]{cancel}
\usepackage{amsmath}
\usepackage{amsthm}
\usepackage{mathtools}
\usepackage{wrapfig}

\usepackage[capitalize]{cleveref}
\crefname{appendix}{appendix}{appendices}
\Crefname{appendix}{Appendix}{Appendices}

\newtheorem{lemma}{Lemma}

\newtheorem{proposition}{Proposition}

\newcommand{\bX}{\bm{X}}

\newcommand{\bx}{\bm{x}}
\newcommand{\logit}{\bm{g}}

\newcommand{\bH}{\bm{{H}}}
\newcommand{\bh}{\bm{{h}}}
\newcommand{\bY}{\bm{{Y}}}

\newcommand{\R}{\mathbb{R}}

\newcommand{\Router}{P}
\newcommand{\router}{r}

\newcommand{\norm}[1]{\left\lVert#1\right\rVert}

\newcommand{\diag}[1]{\mathrm{diag}\left(#1\right)}

\newcommand{\one}{\mathbf{1}}
\newcommand{\PiE}{\Pi_E}
\newcommand{\softmax}{\operatorname{softmax}}

\usepackage[most]{tcolorbox}
\newcommand{\bp}{\bm{p}}

\title{
The Myth of Expert Specialization in MoEs: 
Why Routing Reflects Geometry,
Not Necessarily Domain Expertise. 
}

\author{Xi Wang \\
Johns Hopkins University\\
\texttt{xwang457@cs.jhu.edu} \\
\And
Soufiane Hayou \\
Johns Hopkins University \\
\texttt{hayou@jhu.edu} \\
\AND
Eric Nalisnick \\
Johns Hopkins University\\
\texttt{nalisnick@jhu.edu}
}

\usepackage{listings}
\usepackage{xcolor}

\begin{document}

\ifcolmsubmission
\linenumbers
\fi

\maketitle

\begin{abstract}
Mixture of Experts (MoEs) are now ubiquitous in large language models, yet the mechanisms behind their "expert specialization" remain poorly understood. We show that, since MoE routers are linear maps, hidden state similarity is both necessary and sufficient to explain expert usage similarity, and specialization is therefore an emergent property of the representation space, not of the routing architecture itself.
We confirm this at both token and sequence level across five pre-trained models.
We additionally prove that load-balancing loss suppresses shared hidden state directions to maintain routing diversity, which might provide a theoretical explanation for specialization collapse under less diverse data, e.g. small batch.
Despite this clean mechanistic account, we find that specialization patterns in pre-trained MoEs resist human interpretation: expert overlap between different models answering the same question is no higher than between entirely different questions ($\sim$60\%); prompt-level routing does not predict rollout-level routing; and deeper layers exhibit near-identical expert activation across semantically unrelated inputs, especially in reasoning models.
We conclude that, while the efficiency perspective of MoEs is well understood, understanding expert specialization is at least as hard as understanding LLM hidden state geometry, a long-standing open problem in the literature.
\end{abstract}

\section{Introduction}

Mixture of Experts \citep[MoEs,][]{jacobs1988increased, fedus2022switch} have become a standard architecture for large language models (LLMs). Compared with decoder-only LLMs~\citep{radford2018improving}, MoEs replace the dense feed-forward network (FFN) at each transformer block with a sparse mixture of expert blocks, where each token activates only a sparse subset of FFNs, keeping per-token compute manageable while scaling total model capacity.

An \emph{expert specialization} behavior has been widely observed in pre-trained MoEs: sequences from different domains tend to activate markedly different expert subsets (e.g., Sec.~5 in \citet{muennighoff2024olmoe}, Fig.~9 in \citet{liu2024deepseek}). Yet despite its importance as both a signal of meaningful feature learning and a prerequisite for parameter efficiency, its mechanistic origins remain largely unexplained. This paper asks: (1) \emph{what mechanism produces expert specialization?} and (2) \emph{to what extent are the resulting patterns humanly interpretable?}

% we perform a detailed study of the interactions between the router and hidden states across 5 different pre-trained models released by different labs

The key observation is simple: the MoE router is a linear projection from token hidden states to expert logits, therefore tokens with similar hidden states must activate similar experts. We formalize this as Prop.~\ref{prop:embed_logit_bound}, a tight data-geometry-aware upper bound on logit distance, verified empirically across five pre-trained MoEs (Fig.~\ref{fig:rms}). Specialization is therefore a consequence of how the model organizes its representation space, not a property of the routing mechanism itself. We further show (Prop.~\ref{prop:lbl_correlation}) that when using a load-balancing loss to train MoEs, the router is forced to ``suppress'' shared directions in correlated hidden states, which might provide a theoretical explanation to the specialization collapse observed by \citet{qiu2025demons} under small-batch training.

Despite this clean mechanistic picture, specialization patterns frequently defy human intuition. We demonstrate three cases: (1) different models solving the same math question activate expert patterns as dissimilar as those for entirely different questions ($\sim$60\% overlap, Fig.~\ref{fig:math_arena});
(2) semantically different prompts activate nearly identical experts during prefilling, but diverge during generation (Fig.~\ref{fig:rollout_vs_expert_usage});
(3) in deeper layers, sequences from unrelated domains can collapse to identical expert usage during prefilling (Figs.\ref{fig:hle_vs_wlmbledon},~\ref{fig:duplication_gpt_ling}), yet the suppressed experts remain consequential once generation begins. 

Together, these findings reframe expert specialization as a consequence of the geometry of LLM hidden states, rather than as a standalone interpretable phenomenon. Understanding it is therefore at least as difficult as understanding geometry, which remains an open question even in dense models \citep{lad2024remarkable,skean2025layer}. It may even be harder, since tokens with different hidden states can still activate similar experts.

\section{Related work}
\paragraph{Identification of domain-specialized experts} Several works study expert specialization from an interpretability perspective, aiming to verify its existence and locate domain-specific experts in pre-trained MoEs.
\citet{chaudhari2025moe} introduces a LogitLens-based framework for analyzing MoE specialization patterns, \citet{hu2026gets} proposes a causal-effect method to identify which experts are domain-specific while \citet{wang2026illusion} propose a CommitteeAudit framework. Overall, these works share similar observations: There exists a compact, domain-invariant expert that captures the majority of routing mass across all domains, with some peripheral experts showing domain sensitivity.
Our findings offer a mechanistic explanation for this pattern: shared committee experts likely reflect shared information in the hidden state space, potentially arising from shared syntax structure in the datasets they studied or simply from LLM internal computation.
Our work differs from these in two key ways. First, rather than cataloguing \emph{which} experts are domain-specific, we identify \emph{why} expert usage may or may not correlate with domain / semantic at all, grounding the phenomenon in hidden state geometry. Second, these works rely on short benchmark inputs (MMLU: $<$100 tokens; GSM8K: $<$200 tokens), whereas we study much longer sequences, including multi-thousand-token reasoning chains, where the gap between prefilling and generation behavior becomes critical.

\paragraph{Applications of expert specialization} Beyond interpretability, expert specialization has found practical applications in several domains. In multilingual settings, \citet{chen2026understanding} and \citet{bandarkar2026multilingual} leverage routing patterns to identify language-specific experts and improve MoE performance on low-resource languages. From a different angle, recent work in the security community~\citep{ding2025moecho, nuriyev2026expert} independently arrives at a perspective closely related to ours: because expert usage patterns are a deterministic function of input hidden states, they constitute a side channel that can leak information about user queries, raising significant privacy concerns. 
% This line of work underscores that the router–hidden-state relationship we study is not merely of theoretical interest — it has direct implications for both capability and safety in deployed systems.

% Extended related work on expert specialization discussion from pre-training reports and layer-wise hidden state behavior in dense models can be found in App.~\ref{app:extended_related_work}.

% \section{Extended related work}\label{app:extended_related_work}
\paragraph{Expert specialization insights from pre-trained MoE reports} Many open-sourced MoEs' reports have discussed expert specialization, typically by comparing expert activation patterns across corpora drawn from different domains, e.g., Appendix C in DeepSeek V3~\citep{liu2024deepseek}, Sec 6 in dots.llm1~\citep{huo2025dots}, and Figure 22 in OLMoE~\citep{muennighoff2024olmoe}. 
% Reports from  and OpenMoE~\citep{xue2024openmoe} have more in-depth analysis and the most detailed discussion on expert specialization, where they further looked into the vocabulary level expert activation pattern.
A notable exception is Mistral~\citep{jiang2024mixtral}, whose experts show little specialization, likely a consequence of sparse upcycling during training.
% Work from Qwen's team~\citep{qiu2025demons} demonstrates a failure pattern of MoE lacking specialization: When load balancing loss is enforced on a small batch of data with limited diversity
% Qwen's recent work~\citep{li2026expert} further propose to 
It is worth noting that the goal of monitoring specialization during pre-training is primarily practical rather than interpretive: the goal is to prevent expert homogenization, i.e. the failure mode in which all experts converge to redundant representations, wasting model capacity. DeepSeek V3~\citep{liu2024deepseek} operationalizes this by studying the sensitivity of removing top experts, arguing that MoE with better specialization shows higher sensitivity to expert removal, indicating less redundancy; 
% ERNIE~\citep{ernie2025technicalreport} includes Orthogonalization loss on the router...

\paragraph{Layer-wise behavior of hidden states in dense models}
A key implication of our work is that expert specialization can vary across depth because hidden state geometry evolves across layers. A growing body of work studies this evolution in dense transformers.
\citet{lad2024remarkable} identifies four functional stages of inference — detokenization, feature engineering, prediction ensembling, and residual sharpening — to explain non-uniform performance degradation under layer removal. \citet{skean2025layer} shows that intermediate layers consistently produce richer representations than final layers for downstream tasks. \citet{acevedo2026differential} further finds that syntactic and semantic information are encoded differently across layers. These results translate directly to expert specialization: what a layer's hidden states encode determines which experts it activates, explaining why specialization patterns differ between early and late layers.

\section{Background: MoE}

\paragraph{Decoder only LLM} For a decoder only LLM with $L$ layers, let $\bX^\ell = \{\bx_t^\ell\}_{t=1}^{T}\in \R^{T \times D}$ denote the input to layer $\ell$, where $T$ is the sequence length and $D$ is the residual-stream dimension. A standard transformer block first applies self-attention and then an MLP sublayer, each wrapped with layer normalization and a residual connection:
\begin{align}
    & \bH^\ell = \bX^\ell + \text{SelfAttention}(\text{LN}(\bX^\ell)),
     & \bY^\ell = \bH^\ell + \text{FFN}(\text{LN}(\bH^\ell)),
\end{align}
where we assume a pre-norm formulation, $\text{LN}(\cdot)$ denotes layernorm, $\bH^\ell$ is the intermediate hidden representation after attention and $\bY^\ell$ is the output of layer $\ell$.

\paragraph{Token-choice sparse MoE} Recent models replace the dense MLP with a sparse mixture-of-experts (MoE) module, so that only a small subset of parameters is activated for each token while maintaining the total model capacity unchanged.
Concretely, for a token representation $\bh_t^\ell \in \R^D$, i.e. a row from $\text{LN}(\bH^\ell)$, the MoE router computes routing logits
\begin{equation}\label{eq:router_def}
    \logit_t^\ell = \Router^\ell \bh_t^\ell \in \R^{E},
\end{equation}
where $\Router^\ell \in \R^{E \times D}$ is the router weight matrix and $E$ is the number of experts. Based on these logits, the router selects a small subset $\mathcal{S}_t^\ell \subseteq \{1,\dots,E\}$ of size $k$, typically the top-$k$ experts with the largest logit value. 
Let $s_{t,e}^\ell = \sigma(\logit_t^\ell)_e $ denote the routing weight assigned to expert $e$, usually obtained by setting $\sigma(\cdot)$ to be a softmax or sigmoid-and-normalize.
The MoE output is then computed as
\begin{equation}
    \text{MoE}(\bh_t^\ell)
    = \Sigma_{e \in \mathcal{S}_t^\ell} s_{t,e}^\ell \, \text{FFN}_e^\ell(\bh_t^\ell),
\end{equation}
where $\text{FFN}_e^\ell(\cdot)$ is the feed-forward network corresponding to expert $e$. Thus, different tokens may be processed by different fixed-size subsets of experts, enabling the model to increase parameter count while keeping per-token computation relatively small.
This token-choice routing formulation covers the most recent sparse MoE models. Variations of model structures usually occur in including extra shared experts, whether a jitter noise is added to $\bh$ before routing, different sparsity ratios $\frac{k}{E}$, or the design of $\sigma$.

% An important exception is Llama 4, which uses \emph{expert-choice} routing, where experts select tokens rather than tokens selecting experts. Despite this difference, both paradigms rely on routing scores that determine how token representations are assigned to experts.

\section{Expert ``specialization'' arises from hidden state similarity}
%Expert specialization arises from hidden state similarity

% Historically MoEs, as its name suggests, have each sub-module responsible for different functionality. 

% MLPs in dense transformer is a large non-linear transformation.Routers in MoE, on the otherhand, are much easier to understand. They are just linear projections, so the patterns you see in the expert usage reflects the patterns in the embeddings.

\subsection{Hidden states, expert usage, and the role of auxiliary loss}
To understand the expert activation pattern, we begin by looking at the relationship between the tokens' hidden state ($\bh$) and corresponding router outputs ($\logit$). 
The idea is simple: since the router Eq.~\eqref{eq:router_def} is a linear projection, we expect tokens with similar $\bh$ to show similar $\logit$. To formalize this intuition, we measure the similarity between the hidden states using the Euclidean distance in a data-dependent space, and obtain a tight upperbound on how router outputs behave as a function of hidden states.
\begin{proposition}\label{prop:embed_logit_bound}
Let \(H \in \mathbb{R}^{N\times D}\) be a data matrix whose rows are \(\bh_1^\top,\dots,\bh_N^\top\), and let
\(H = U \Sigma V^\top\) be its singular value decomposition. Denote by $m=\min\{N,D\}$ the rank of $H$. Fix \(r \in \{1,2, \dots, m\}\), and let \(V_r \in \mathbb{R}^{D\times r}\) be the matrix formed by the top \(r\) right singular vectors of \(H\). Define the orthogonal projector onto the corresponding principal subspace by $\Pi_r := V_r V_r^\top \in \mathbb{R}^{D\times D}.$
Let $P \in \mathbb{R}^{E\times D}$ be the router weight matrix. Then, for any $i,j \in \{1,\dots,N\}$,
\begin{equation}\label{eq:norm_bound}
\underbrace{\|P\bh_i - P\bh_j\|_2}_{\text{Logits similarity}}
\le
\underbrace{\|P\Pi_r\|_2}_{\text{Router-data alignment}} \, \underbrace{\|\bh_i-\bh_j\|_{\Pi_r}}_{\text{Hidden state similarity}}
+
\underbrace{\|P(I-\Pi_r)(\bh_i-\bh_j)\|_2}_{Residual}.
\end{equation}
% In particular,
% \[
% \|Px_i - Px_j\|_2
% \le
% \|P\Pi_r\|_2 \, \|x_i-x_j\|_2
% +
% \|P(I-\Pi_r)\|_2 \, \|(I-\Pi_r)(x_i-x_j)\|_2,
% \]
% and
% \[
% \|(I-\Pi_r)(x_i-x_j)\|_2^2
% =
% \sum_{k=r+1}^{D} \langle x_i-x_j, v_k\rangle^2.
% \]
\end{proposition}
$\norm{\cdot}_2$ denotes operator norm (maximum singular value) for matrix input, and $\|z\|_{\Pi_r}=\|\Pi_r z\|_2$ denotes the norm in the space corresponding to the projector $\Pi_r$. The full proof is provided in App.~\ref{app:proof}, which first projects each $\bh_i$ on $\Pi_r$ and then applies the triangle inequality.

\begin{figure}[!t]
    \centering
    \includegraphics[width=.95\linewidth]{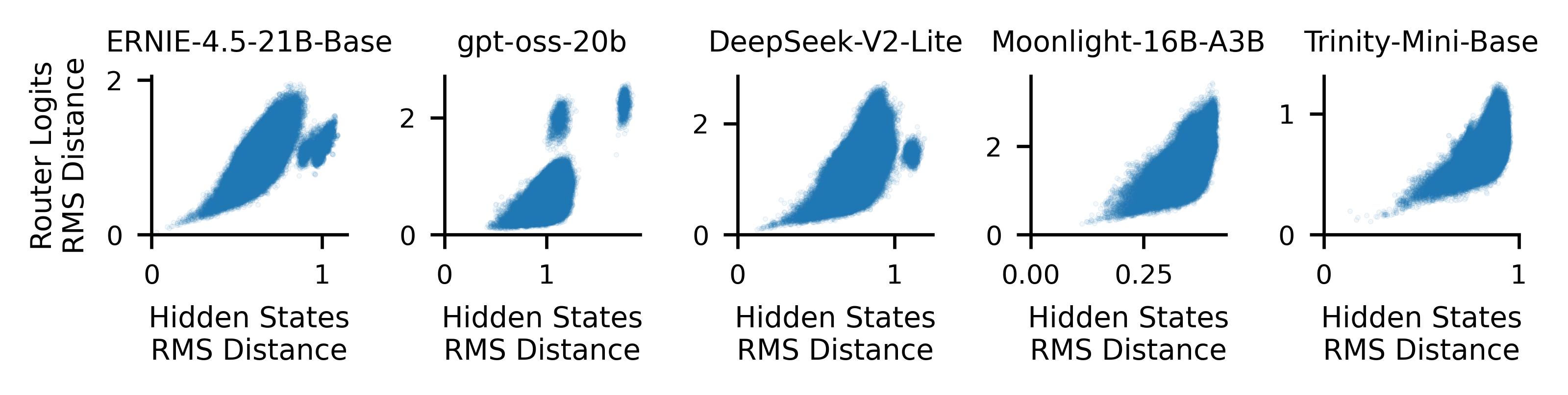}
    \\[-1.5ex]
    \caption{
    {\small{Empirical verification of Prop.~\ref{prop:embed_logit_bound}.
    On the middle layer of 5 models, using 16 sequences from OpenWebText, we compare the RMS distance (Eq.~\eqref{eq:rms_def}) of all token pairs' hidden state v.s. their router logits.
    \textbf{Bottom left corners:} Across all models, similar hidden states induce similar router logits, i.e. similar expert usage.
    \textbf{Middle right:} When the hidden states are different, the router logits can be similar or different.
    % Similiar token --> (provably) similiar expert usage, the other direction does not necessarily hold. Triangle shape tells the story. 
    \textbf{Outlier dots} are from distances to the massive activation of the first token, i.e.\ attention sink~\citep{sun2024massive}.}
    % Moonlight-16B shows much smaller hidden state distance because it is trained with Muon, where overall activation norms are more concentrated (Fig.~\ref{fig:muon_vs_adamw}).
    }}
    \label{fig:rms}
\end{figure}

\paragraph{Triangle shape of hidden states vs logit distances}
Under the assumption that the data variance is mostly concentrated in the top $r$ directions (right-most columns in Fig.~\ref{fig:alignment:gpt_oss_router_data_alignment},\ref{fig:alignment:trinity_router_data_alignment},\ref{fig:alignment:deepseek_router_data_alignment},\ref{fig:alignment:moonlight_router_data_alignment}), the residual term is negligible compared to the first term.
Proposition \ref{prop:embed_logit_bound} suggests that when two tokens' hidden states are similar in the sense of norm $\|.\|_{\Pi_r}$, the upper bound collapses; their corresponding logits, i.e., the activated experts, must be similar. This is a much sharper bound than the trivial bound $\|P\|_2 \|h_j - h_j\|_2$ (verified in Fig.~\ref{fig:bound_viz}, Appendix). However, when the hidden states differ, their router logits can be either similar or dissimilar. Fig.~\ref{fig:rms} empirically verifies this using OpenWebText on 5 different MoEs, where we observe the dots to demonstrate a \emph{triangle} shape with small / large hidden state distance regions being the triangle's angle / side. 

\paragraph{Impact of load-balancing} From \cref{prop:embed_logit_bound}, if two hidden states are highly correlated (and there norms are roughly of the same order), router outputs will be similar. This could happen for several reasons. One structural reason is depth: as depth grows, correlation between hidden states increases between token hidden states, leading to increasing similarity as we will see below. In the next result, we show that load-balancing loss helps mitigate this problem by forcing the router to ignore directions shared by hidden states to achieve balanced expert usage.

\begin{proposition}[Informal]\label{prop:lbl_correlation}
We consider a simple hidden state correlation model where the hidden states are represented by \vspace{-2pt}
    \[
    \bh_i = \mu + \xi_i, \quad i = 1, \ldots.N,
    \]
where \(\mu\in\R^D\) is a shared direction and $\xi_i$ are terms that differentiate each hidden state. We assume that $\xi_i$ are centered and i.i.d. Consider a simplified load balancing loss for router $\Router \in \R^{E\times D}$
\[
\mathcal{L}_{\text{bal}}(\Router) := \norm{\frac{1}{N}\sum_{i=1}^N {\softmax}\left({\Router \bh_i}\right) - \frac{1}{E}\mathbf{1}}^2.
\]
Then, a near-minimizer of $\mathcal{L}_{\text{bal}}(\Router)$ must satisfy
$\Router\mu \approx \overline{P\mu}\,\one, ~\text{where} ~~\overline{\Router\mu}:=\frac1E \one^\top \Router\mu$.
That is, the shared direction does not affect the routing as softmax and top-K are invariant to constant shift.
\end{proposition}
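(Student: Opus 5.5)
To make the informal statement precise, I would measure "near-minimizer" by $\mathcal{L}_{\text{bal}}(\Router)\le\varepsilon$. The goal is a bound of the form $\|(I-\tfrac1E\one\one^\top)\Router\mu\|\le \delta(\varepsilon,\text{noise})$, where $\delta\to0$ as $\varepsilon\to0$ and as the fluctuations $\Router\xi_i$ shrink relative to $\Router\mu$. Write $g_i=\Router\bh_i=\Router\mu+\Router\xi_i$, and decompose $\Router\mu=\overline{\Router\mu}\,\one+c$ with $c:=(I-\tfrac1E\one\one^\top)\Router\mu$, so $\one^\top c=0$. Since softmax is shift invariant, $\softmax(g_i)=\softmax(c+\Router\xi_i)$, and the claim reduces to showing that $c$ must be small.

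\textbf{Step 1 (population average).} Replace the empirical mean by its expectation $\bar s(c):=\mathbb{E}_\xi\,\softmax(c+\Router\xi)$. The $\xi_i$ are i.i.d. and softmax is bounded in $[0,1]^E$, so Hoeffding plus a union bound over coordinates gives $\|\frac1N\sum_i\softmax(g_i)-\bar s(c)\|\le O(\sqrt{E\log(E/\eta)/N})$ with probability $1-\eta$. Hence $\|\bar s(c)-\frac1E\one\|\le \sqrt{\varepsilon}+O(N^{-1/2})$.

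\textbf{Step 2 (leading-order noise).} First treat the small-noise regime, $\Router\xi\approx0$. There $\bar s(c)\approx\softmax(c)$, and $\softmax(c)=\frac1E\one$ if and only if $c\propto\one$, which together with $\one^\top c=0$ forces $c=0$. For quantitative stability, note that the Jacobian of softmax at $c$ is $\diag{s}-ss^\top$. On the bounded set where $\|c\|$ is moderate this is uniformly positive definite on $\one^\perp$, giving $\|\softmax(c)-\frac1E\one\|\ge \kappa\|c\|$. For large $\|c\|$, softmax concentrates on the top coordinates, so the distance to uniform is bounded below by a constant. Combining the two regimes yields $\|c\|\lesssim\sqrt\varepsilon/\kappa$ up to the noise correction.

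\textbf{Step 3 (general noise, main obstacle).} To handle non-negligible $\Router\xi$, I would use a monotonicity argument. The map $c\mapsto\bar s(c)$ is the gradient of the convex function $F(c)=\mathbb{E}\log\sum_e\exp(c_e+(\Router\xi)_e)$. Let $c^\ast$ denote the unique point in $\one^\perp$ with $\bar s(c^\ast)=\frac1E\one$; it exists by strict convexity on $\one^\perp$ provided the noise has nondegenerate support. Then $\langle \bar s(c)-\bar s(c^\ast),c-c^\ast\rangle\ge\kappa'\|c-c^\ast\|^2$, so $\|c-c^\ast\|\le(\sqrt\varepsilon+O(N^{-1/2}))/\kappa'$. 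The remaining task is to show that $c^\ast$ is small. If the noise law is exchangeable across experts, for example with isotropic $\xi$ and rows of $\Router$ of comparable norm, symmetry gives $c^\ast=0$. In general $c^\ast$ only compensates the expert-wise bias in the noise and is independent of $\mu$.

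The hard part is making $\kappa'$ explicit, since strong convexity degrades when the logits are large. My first attempt would be to restrict to a compact parameter region for $\Router$ and bound $\kappa'$ below by the smallest softmax probability. I expect the final statement to be
\[
\Router\mu=\overline{\Router\mu}\,\one+c^\ast+O\bigl(\sqrt{\varepsilon}+N^{-1/2}\bigr),
\]
where $c^\ast=0$ under exchangeable noise. In words, the component of $\Router\mu$ orthogonal to $\one$ must essentially vanish.
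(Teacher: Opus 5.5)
Your route is genuinely different from the paper's and essentially sound. The paper works only in the small-logit regime $\max_i\|P(\mu+\xi_i)\|\le\varepsilon$. It expands $\softmax(z)=\frac1E\one+\frac1E\Pi_E z+O(\|z\|^2)$ at $0$, so after averaging the noise cancels by linearity. This uses only that the empirical mean $\frac1N\sum_i\xi_i$ vanishes, not independence, and the loss reduces to $\frac1{E^2}\|\Pi_E P\mu\|^2+O(\varepsilon^3)$. You instead use that the averaged softmax is the gradient of a convex log-sum-exp potential, strongly monotone on $\one^\perp$ over bounded sets. That gives the non-perturbative bound $\|c-c^*\|\le(\sqrt{\mathcal L_{\mathrm{bal}}}+O(N^{-1/2}))/\kappa'$.

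This exposes something the paper's argument hides: without small logits or a symmetry assumption the informal claim fails. For large non-exchangeable noise the balanced point $c^*(P)$ can be nonzero, of the order of the noise logits, and a router with $\Pi_E P\mu=c^*(P)$ is then an exact population minimizer. In the paper's regime this offset is second order and sits inside its $O(\varepsilon^2)$ remainder. Expanding softmax to second order gives $c^*_e\approx-\frac12\bigl(v_e-\frac1E\sum_{e'}v_{e'}\bigr)$ with $v_e:=\mathbb{E}[(\Pi_E P\xi)_e^2]$, so $\|c^*\|=O(\mathbb{E}\|P\xi\|^2)$. The cost of your approach is a constant $\kappa'$ that degrades exponentially with the logit scale, plus the need for an extra hypothesis to force $c^*=0$.

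Several steps need repair.

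(i) In Step 1 you apply Hoeffding at a fixed $P$. A near-minimizer is selected after seeing the sample, so you would need concentration uniform over a compact set of routers, via a covering argument with a rate depending on $ED$. It is simpler to drop the population step: $\frac1N\sum_i\softmax(c+P\xi_i)$ is itself the gradient of the convex function $c\mapsto\frac1N\sum_i\log\sum_e\exp(c_e+(P\xi_i)_e)$. Step 3 then runs deterministically with an empirical balanced point $\hat c^*$, mirroring the paper's use of the empirical mean.

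(ii) Existence of $c^*$ comes from coercivity, not strict convexity. Since log-sum-exp dominates the max and $\mathbb{E}\xi=0$, you get $F(c)\ge\max_e c_e\ge\|c\|_\infty/(E-1)$ on $\one^\perp$. Strict convexity there needs no nondegeneracy of the noise, because every Jacobian $\diag{s}-ss^\top$ with $s$ in the open simplex is positive definite on $\one^\perp$.

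(iii) Isotropic $\xi$ with rows of $P$ of equal norm does not make $P\xi$ exchangeable. You need $PP^\top$ invariant under simultaneous row and column permutations. By the formula above, $c^*\neq0$ already at second order whenever the $v_e$ differ, e.g.\ when some rows are more correlated than others.

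(iv) Your Step 2 (negligible noise, possibly large $c$) is not the paper's regime. There $P\mu$ and $P\xi_i$ are both $O(\varepsilon)$, and a Lipschitz estimate of the noise correction would be as large as $c$ itself. What covers the paper's setting is Step 3 together with the second-order bound $\|c^*\|=O(\mathbb{E}\|P\xi\|^2)$, which relies on $\mathbb{E}\xi=0$. Without that bound, or exchangeability, your closing sentence that the orthogonal component ``must essentially vanish'' is not justified.
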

% in the appendix, where we show that minimizing the load balancing loss forces the shared direction in the data to be routing-insignificant.
The proof is presented in Appendix \ref{app:balance_loss_analysis}, where we assume the small-logit regime under which softmax can be linearized.
\cref{prop:lbl_correlation} shows that the load-balancing objective forces the router to suppress the shared direction $\mu$, which is expected since the shared direction pushes the hidden states to use similar experts. Realistically, load-balancing is \emph{added} to the pre-training objective, which means that we should only partially observe this effect. We believe this result could provide a theoretical explanation for the observation in \citet{qiu2025demons}, which suggests that when load-balancing is applied on small micro-batches (rather than the large global-batches), the resulting model loses any form of expert specialization, showing same expert usage patterns for inputs of different topics. Intuitively, small batches are not diverse enough, and therefore shared directions within small batches can significantly vary from one batch to another. This constant change in shared directions from one batch to another prevents the router from finding the informative directions. Similarly, Fig. 5 in \citet{wang2024auxiliary} showed that auxiliary loss-free approach works better with larger batch size. 
\begin{figure}[!t]
    \centering
    \begin{subfigure}{\linewidth}
    \centering
        \includegraphics[width=\linewidth]{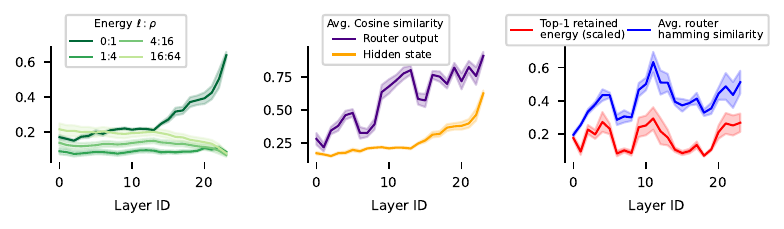}
        \caption{gpt-oss-20b, trained with AdamW and auxiliary load balancing loss.}
    \end{subfigure}
    \begin{subfigure}{\linewidth}
    \centering
        \includegraphics[width=\linewidth]{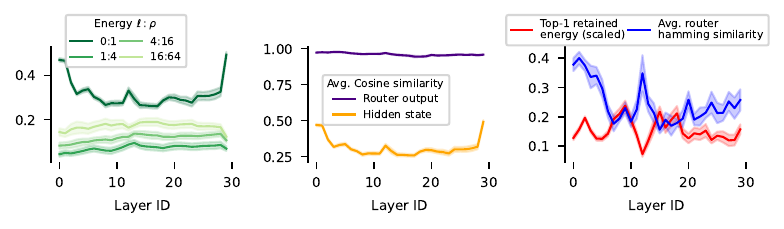}
        \caption{Trinity-Mini-Base, trained with Muon and auxiliary-free load balancing.}
    \end{subfigure}
     % \includegraphics[width=0.95\linewidth]{figures/correlation/arcee_router_inspect.pdf}
    % \\[-1.5ex]
    \caption{\small{
    On gpt-oss-20b and Trinity-Mini-Base, we studied tokens in 16 random sequences from OpenWebText.
    Lines / bars denote the mean / 1-std. over 16 sequences.
    Two models use different training configurations, resulting in distinct patterns across depth.
    \textbf{Top Left}: Token hidden states become more correlated with depth, indicated by stronger energy (Eq.~\eqref{eq:directional_energy}) from the shared direction (first right singular vector).
    \textbf{Top Middle}: After layer 11, the router output does not show a significant increase in cosine similarity, while hidden states do (Eq.~\eqref{eq:token_cos_def}).
    \textbf{Top Right}: Looking at the router, it learns to suppress the shared direction to maintain diversity of router outputs: The trace of router's projection (averaged over rows) onto the shared direction (red line, Eq.~\eqref{eq:retained_energy}) aligns with the shape of router output similarity (blue lines, Eq.~\eqref{eq:hamming_def})
    \textbf{Bottom left}: Under Muon trained model, token hidden states does not show increased correlation with depth.
    \textbf{Bottom middle and right}: When no load balancing loss is used, the router does not show evidence of suppressing shared direction.
    }}
    % (results for other models are shown in Fig.??? App ???}
    \label{fig:gpt_oss_correlation_depth}
\end{figure}\vspace{-5pt}

\paragraph{Auxiliary loss pushes router to de-correlate hidden states}
The upper bound in Eq.~\eqref{eq:norm_bound} involves the term $\|P \Pi_r\|$ which measures router $\leftrightarrow$ hidden-state alignment.
Intuitively, if this term vanishes, the logits would fail to capture the differences in the hidden states, effectively ignoring the data.
While we did not empirically verify this,\footnote{To rigorously verify this, a comprehensive set of experiments with MoE pretraining is required.} such a scenario could happen during MoE pre-training 
when the data is not diverse or the load balancing loss coefficient is set too large\footnote{Regularizing the router to orthogonal~\citep{omi2025load,ernie2025technicalreport} cannot alleviate this, as the router can be an orthogonal matrix with all rows orthogonal to the data principal directions.}. We did not observe such extreme ``dead router'' patterns in open-sourced MoEs, as these models often show suboptimal performance and would not be released to the public.
However, a related behavior can be observed in deeper layers of MoEs.
In particular, LLM's hidden states across tokens inside a sequence tend to show increasing similarity with depth (left Fig.~\ref{fig:gpt_oss_correlation_depth} for GPT-OSS, and Fig.~\ref{fig:token_sim_vs_depth_dense} for dense models), which effectively \emph{decreases} the diversity of expert usage. This is consistent with the \emph{rank-collapse} phenomenon that occurs with depth \citep{schoenholz2017deepinformationpropagation, hayou2019impactactivationfunctiondeep, hayou2021stableresnet, noci2022signalpropagationtransformerstheoretical}.
The router, driven by auxiliary load-balancing loss, would learn to suppress the shared direction (first singular vector) to maintain diverse router outputs. 
We confirm such suppression by comparing the router's retained energy on the shared direction with the expert usage similarity
(right-most Fig.~\ref{fig:gpt_oss_correlation_depth}, or by looking at the router -- singular-direction alignments, where the router shows less alignment with the first / shared direction compared with the second / third one (Middle columns, Fig.~\ref{fig:alignment:gpt_oss_router_data_alignment},\ref{fig:alignment:deepseek_router_data_alignment}).
In contrast, in models that are ``auxiliary loss free'', such as Moonlight 16B~\citep{liu2025muon} or Trinity Mini~\citep{singh2026arcee}, we do not observe such patterns (Fig.~\ref{fig:corr_appends_Moonlight},\ref{fig:corr_appends_Trinity},~\ref{fig:alignment:moonlight_router_data_alignment},~\ref{fig:alignment:trinity_router_data_alignment}).

% \sh{Not sure where the next two paragraphs were intended to fit.}\\
% we can veiw this from two perspectives:
% 1. If we track the lyaer wise retained energy v.s. the average expett usage hamming similaritry, we can see that the shape of the retained energy aligns with the similarity curve
% % Additionally, if we lookat the per layer router singular direction alignment, we notice that the first direction's alignment is often smaller than the second or the thrids ones, indciating that the model is surpressing the shared direcitni energy

% models exhibit less severe hidden state similarity, potentially due to their being both trained with Muon. 
% This also highlights that small batch sizes may not be the only reason for correlated hidden states; it is also affected by the optimizer and the depth scaling.

\begin{figure}[!t]
    \centering
    \includegraphics[width=0.37\linewidth]{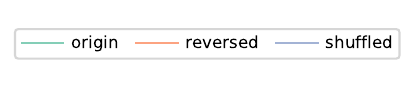}\\[-1ex]
    \centering
    \begin{subfigure}[t]{0.3\textwidth}
    \centering
     \includegraphics[width=\linewidth]{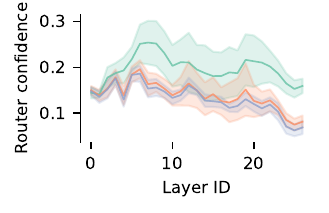}
     \caption{ERNIE-4.5-21B}
    \end{subfigure}
    \begin{subfigure}[t]{0.3\textwidth}
    \centering
     \includegraphics[width=\linewidth]{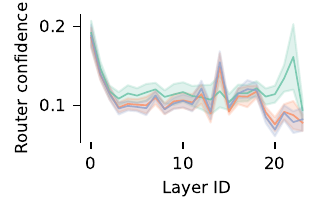}
     \caption{GPT-OSS-20B}
    \end{subfigure}
    \begin{subfigure}[t]{0.3\textwidth}
    \centering
     \includegraphics[width=\linewidth]{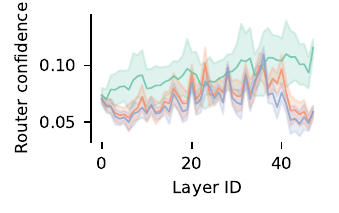}
     \caption{Qwen-3-30B}
    \end{subfigure}\\[-1.5ex]
    \caption{
    \small
    {
    We shuffled and reversed 16 sequences from OpenWebText to simulate OOD inputs.
    Router outputs show less confidence (measured by maximum softmax probability, Eq.~\eqref{eq:conf_def}) on OOD inputs compared with the original ones (green line), due to the reduced alignment data router alignment (Eq.~\eqref{eq:norm_bound}, verified in Fig.~\ref{fig:ood_mech}).
    % Error bars denote one standard deviation over 16 sequences.
    % OOD inputs show lower router confidence (maximum softmax probability). Error bars denote one standard deviation over 16 sequences. Mechanism explained in 
    }}
    \label{fig:ood_inputs}\vspace{-5pt}
\end{figure}

\paragraph{Router confidence detects OOD inputs}
Lastly, the router in pre-trained MoE is not a random matrix; it shows alignment with the data subspace during pre-training (left-most columns in Fig.~\ref{fig:alignment:gpt_oss_router_data_alignment},\ref{fig:alignment:trinity_router_data_alignment},\ref{fig:alignment:deepseek_router_data_alignment},\ref{fig:alignment:moonlight_router_data_alignment}).
This suggests that when the input sequence is out-of-distribution (OOD), its hidden states could show less alignment with the router,
reducing the upper bound and outputting more uniform logits.
We simulate OOD by reversing and shuffling tokens in the sequences.
Indeed, in Fig.~\ref{fig:ood_inputs} we see the confidence, measured by maximum softmax probability~\citep{hendrycks2016baseline}, is lower for OOD inputs compared with ordinary inputs. In Fig.~\ref{fig:ood_mech} in Appendix, we further verified the reduced alignment between router and reversed inputs compared with standard inputs.

\subsection{Expert activation patterns are context dependent}

In the previous section, we argued that for reasonably trained routers (non-collapsed), the token-wise expert usage is determined by its hidden state, whose value depends on: a) the token ID; b) the context.
In earlier layers, we expect the hidden state to be mainly dependent on the token ID as it is closer to the input embedding layer. In deeper layers, strong correlation between hidden states for different tokens appears because depth and self-attention induce more interaction with the other tokens, and therefore the hidden states and the expert usage become more context dependent.

We empirically verify expert usage's context dependency by putting the same token(s) under different contexts on \texttt{gpt-oss-20b}.  In Fig.~\ref{fig:same_seq_different_context} (top), we plot the activated experts for ``apple'' in daily context v.s. in coding context.
% and semicolon ``;'' as separation of sentence v.s. end of line in coding.
Broadly, in first layer, there exists a high overlap across context (more purple blocks); but in later layers, the overlap starts to reduce. Fig.~\ref{fig:same_seq_different_context} (bottom) performs a similar study but for a 6-token sentence in daily v.s. medical context, where we look at expert activation frequency for the 6 tokens.
Note that our observations go against the ``Context-Independent Specialization'' behavior observed by OpenMoe~\citep{xue2024openmoe}, which we believe arises from insufficient training.

\begin{wrapfigure}[24]{r}{0.35\textwidth}
\vspace{-3.2\baselineskip}
\captionsetup{justification=centering}
    \centering
    \includegraphics[width=.85\linewidth]{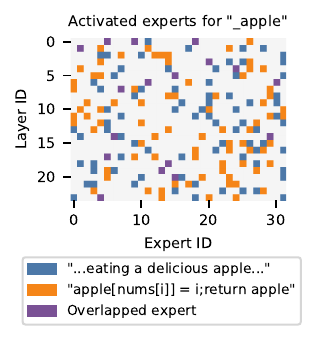}
    \includegraphics[width=\linewidth]{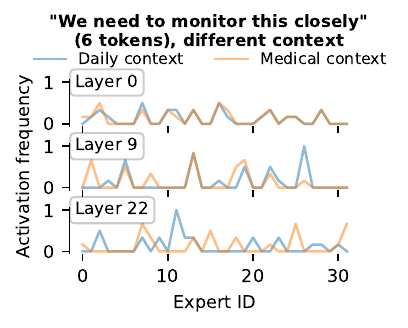}
\caption{\small{Same token (top) / sentence (bottom), different context, the expert usage is context dependent in deep layers.}}
\label{fig:same_seq_different_context}
% \vspace{-1em}
\end{wrapfigure}

% \begin{figure}[!t]
%     \centering
%     \includegraphics[width=0.325\linewidth]{figures/same_token_different_context_oss_apple_2.pdf}
%     \centering
%     ~\includegraphics[width=0.325\linewidth]{figures/same_token_different_context_oss_apple.pdf}
%         ~\includegraphics[width=0.325\linewidth]{figures/same_token_different_context_oss_sep.pdf}
%     \\[-1.5ex]
%     \caption{{
%     On GPT-OSS 20b, same token activates different experts (blue v.s. orange blocks).
%     High overlap (more purple blocks) in the first row near input layers.
%     Less overlap (less purple blocks) in intermediate layers.
%     First column (baseline / reference): ``apple'' in daily context;
%     Second column: ``apple'' in daily v.s. in coding context;
%     Last column: ``;'' as end of sentence v.s. end of line of code;
%     }}
%     \label{fig:token_vs_context}
% \end{figure}

\subsection{Sequence and dataset-level expert specialization}

\begin{figure}[!t]
    \centering
    \includegraphics[width=\linewidth]{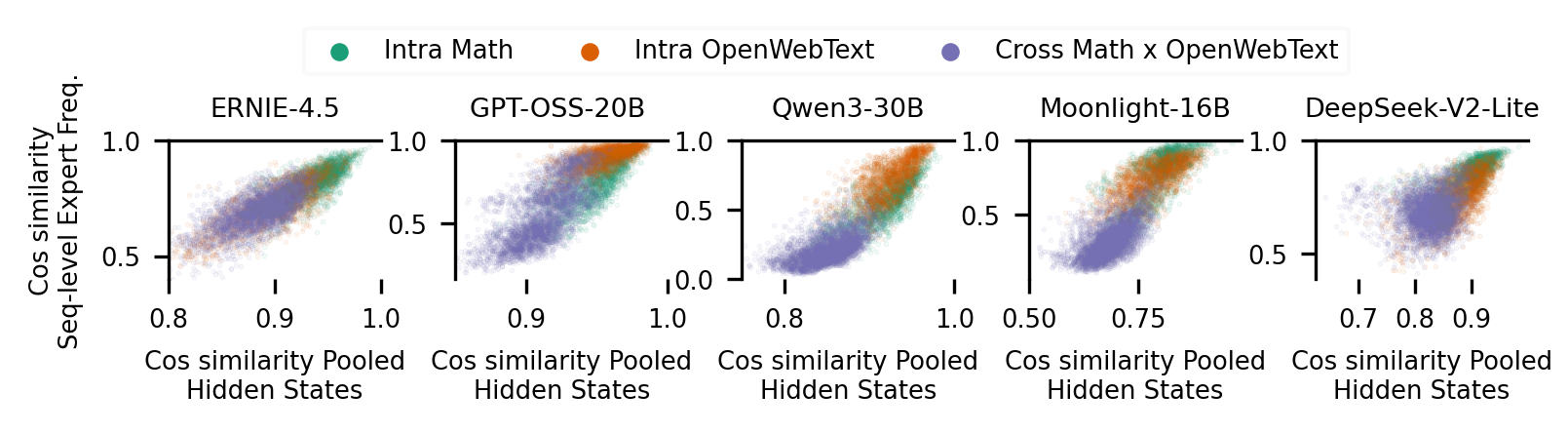}
    \\[-1.5ex]
    \caption{%
    \small{
        Pairwise comparisons of inputs drawn from Nemotron-CC-Math-v1, OpenWebText, and across these two datasets, plotting pooled hidden-state similarity (Eq.~\eqref{eq:seq_pooled_sim}) against expert-usage frequency similarity (Eq.~\eqref{eq:seq_expert_freq_sim}).
        Consistent with the token-level findings in Fig.~\ref{fig:rms}, more similar sequences tend to activate more similar experts.
    }}
    \label{fig:pooled_embd}
\end{figure}

The preceding analysis examined expert usage at the token level, showing that tokens with similar hidden states tend to route to similar experts.
We now extend this view to the sequence level by comparing the cosine similarity of pooled hidden states (Eq.~\eqref{eq:seq_pooled_sim}) against the cosine similarity of sequence-level expert activation frequencies (Eq.~\eqref{eq:seq_expert_freq_sim}).
Fig.~\ref{fig:pooled_embd} reports pairwise comparisons of 100 sequences from \texttt{OpenWebText} and 100 from \texttt{Nemotron-CC-Math}~\citep{karimi2025nemotroncc}, evaluated both within and across datasets at the middle layer of five models (full results appear in Fig.~\ref{fig:seq_emb_vs_expert_gpt},\ref{fig:seq_emb_vs_expert_qwen},\ref{fig:seq_emb_vs_expert_ernie} in Appendix).

Two patterns emerge. First, sequence similarity strongly correlates with expert-usage similarity across all models. Second, some models exhibit a triangular point cloud: while similar sequences consistently activate similar experts, dissimilar sequences can activate either similar \emph{or} different experts, a many-to-one relationship that already hints at interpretive 
difficulty. Notably, cross-dataset pairs cluster at lower hidden-state similarity than intra-dataset pairs, offering a straightforward explanation for the specialist behavior observed in prior work: inputs from different domains tend to have different hidden-state distributions, and therefore engage different sets of experts.

So far, the picture seems tractable: understand the hidden states, and expert usage follows.
But this view holds only at a coarse, dataset level.
As soon as we ask finer-grained questions, e.g. do two models solving the same problem use the same experts? does the prompt predict the rollout's routing 
behaviour? The tractability breaks down. This leads us to a key tension:
\begin{tcolorbox}[boxsep=1pt,left=2pt,right=2pt,top=0pt,bottom=0pt,colback=yellow!10, colframe=yellow!50!black]
Understanding expert specialization is at least as hard as understanding hidden states, and potentially harder, since inputs with \emph{different} hidden states can activate similar experts; inputs with similar hidden states can activate different experts once generation begins.
\end{tcolorbox}
The three examples in the following section each stress-test one facet of this tension: specialization across models, across rollout length, and across layers.
% \begin{figure}[!t]
% \captionsetup{justification=centering}
% \centering
%  \begin{subfigure}[t]{0.38\textwidth}
%     \centering
%      \includegraphics[width=\linewidth]{figures/same_token_different_context_gpt_oss_medical_vs_daily.pdf}
%      \caption{Same token,\\different context.}
%     \end{subfigure}
%  %    \begin{subfigure}[t]{0.32\textwidth}
%     \caption{Same sentence, different context \xw{Look at output} \xw{Look at last few tokens} Note that we expect the patterns here, especially ones in later layers, to be model dependent}
% \end{figure}

\begin{figure}[!t]
\vspace{-2mm}
    \centering
    \includegraphics[width=\linewidth]{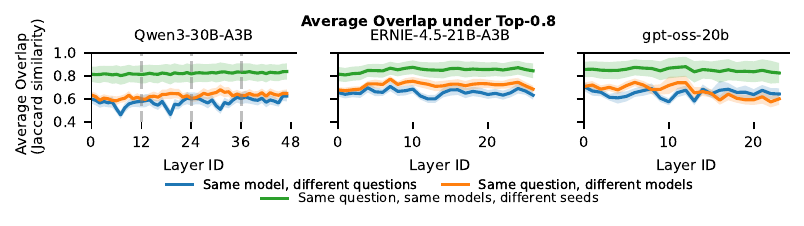}\\[-3.2ex]
    \caption{\small{
    Expert overlap (Jaccard similarity at Top-$p{=}0.8$, Eq.~\eqref{eq:overlap_at_p}) between rollout pairs 
    on 30 HMMT February 2025 questions, averaged over 4 seeds and three MoEs. 
    Different models solving the same question (orange) share only $\sim$60\% 
    of their most-used experts, on par with the same model solving different 
    questions (blue), and well below the same-question, same-model baseline 
    (green). The gap is consistent across all layers.
    }}
    \label{fig:math_arena}
\end{figure}

\begin{figure}[!t]
    \centering
    \includegraphics[width=\linewidth]{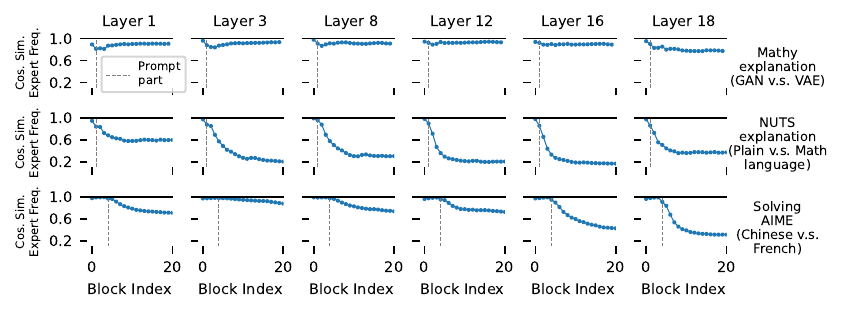}\\[-2ex]
    \caption{\small{
    Expert-usage similarity between paired queries on \texttt{Ling-mini}, 
    tracked from prompt through rollout. The dashed line 
    marks the prompt boundary. Across all three query pairs, prompt-phase 
    similarity is near-identical; once generation begins, similarity either 
    holds or drops sharply depending on the query and is not predictable 
    from the prompt alone.
    % Same math question, answered in different languages, y axis shows the similarity of expert usage frequency up to $k$ blocks of tokens. The similarity would gradually decrease throughout rollouts as the distribution of the two sequences gradually becomes diverse.
    }}
    \label{fig:rollout_vs_expert_usage}
\end{figure}

\vspace{-2pt}
\section{Why expert specialization patterns are hard to understand}
\label{sec:challenge}
\vspace{-2pt}
The analysis above might suggest that expert specialization is a stable, semantics-driven phenomenon that can be read off from the inputs. 
We show this fails at the level of individual problems, generations, and layers, and identify what drives each breakdown.

% \begin{itemize}[leftmargin=*, topsep=0pt, parsep=0pt, itemsep=1.5pt]
%     \item While it is easy to make macro-level arguments on entirely different datasets (e.g. Wiki v.s. Arxiv / Math v.s. Code) showing different hidden states, micro-level arguments are difficult to make, for example, given the same set of questions, are rollouts from different models similiar or not; Or whether the same solution written in different languages is similar or not.
%     \item  Behavior changes during rollout: In practice, we are interested in the experts usage of prompt + rollout, where the rollout can be very long for reasoning models. Similarity in the prompt does not imply similarity in rollouts
%     \item Different layers show different behaviors: While hidden states in earlier layers are easier to understand and dominated by the token distribution, the pattern in deeper layers may not depend on the input semantics but on the potential output patterns.
% \end{itemize}
% In the following sections, we will demonstrate three examples, one from each perspective. 

% The following three sections follow:
% - What we did
% - What we observed
% - What's the implication pattern

\subsection{Math solutions from different models activate different experts}

\citet{balunovic_srimatharena_2025} benchmarked a wide range of reasoning LLMs on math datasets, enabling us to study expert activation patterns across reasoning trajectories produced by different models on the same problems.
The natural expectation is that solutions to the same question should engage the same experts, regardless of which model generated them.

Our results contradict this expectation. We consider 30 questions from HMMT February 2025 solved by 60 (reasoning) models, and study all \texttt{(question, model, seed)} combinations via pairwise comparisons that fix two of the three axes. For each pair, we measure overlap at Top-$p$ (Eq.~\eqref{eq:overlap_at_p}): the fraction of shared experts responsible for $p\%$ of the tokens processed.

Results at $p = 0.8$ are shown in Fig.~\ref{fig:math_arena}. Strikingly, solutions from two \emph{different} models to the \emph{same} question show only $\sim$60\% expert overlap on average, comparable to the overlap between solutions from the \emph{same} model on \emph{different} questions, and far below the same-model same-question baseline.
This gap is consistent across all layers and all three evaluated MoEs 
In terms of expert activation, solving the same problem with a different model is as ``foreign'' as solving an entirely different problem.
Results under other values of $p$ are shown in Fig.~\ref{fig:overlap_at_p_more}, where similar patterns hold.

This highlights the difficulty of making micro-level claims about expert specialization: even when two trajectories address identical mathematical content, model-specific factors can drive expert usage further apart than content differences do.
\vspace{-4pt}
\subsection{Rollout content dominates expert usage}
\vspace{-4pt}

\begin{figure}[!t]
\centering
\includegraphics[width=\linewidth]{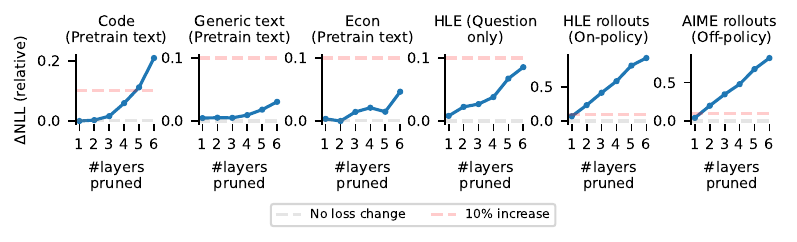}\\[-2ex]
    \caption{\small{On gpt-oss-20b, we keep the $12$ most-used experts (out of 32) in the last 
    $k$ layers, using usage frequency from a single hand-written paragraph.
    \textbf{First four columns:}
    Pruning up to 2 layers has a negligible effect 
    ($<$10\% NLL increase) on prompt-only inputs, indicating router collapse during 
    prefilling. 
    \textbf{Last two columns:} 
    Adding completions makes the pruned experts consequential, with NLL increasing substantially for on- or off-policy rollout.
    }}
    \label{fig:pruning}
% \vspace{-6pt}
\end{figure}

\begin{figure}[!t]
\centering
    \includegraphics[width=\linewidth]{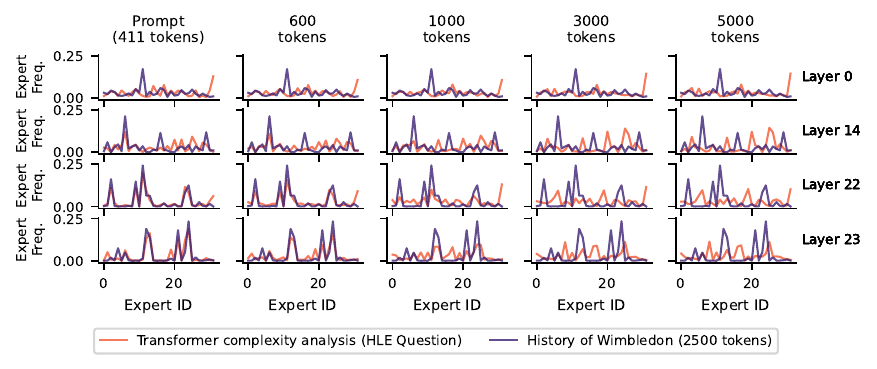}\\[-2ex]
    \caption{\small{
    Per-expert activation frequencies for an HLE question (orange, 411 tokens) and a Wimbledon Wikipedia passage (purple, 2500 tokens) across four layers and five generation lengths for answering the HLE question.
    During prefilling, the two unrelated sequences activate nearly identical experts in deeper layers. As the generation proceeds, their activation profiles diverge, with the split emerging earlier in later layers.
    % While GPT-OSS shows no specialization in later layers for just the prompt, the expert usage pattern starts to diverge between the generic corpus and the generated rollout as reasoning proceeds.
    }}
    \label{fig:hle_vs_wlmbledon}
\end{figure}

We prompt an instruction-tuned MoE \citep[\texttt{Ling-mini}][]{ling} with query pairs that are matched along one axis but differ along another: (i) different concepts explained in the same style; (ii) the same concept explained in plain v.s mathematical language; (iii) AIME problems prompted to be solved in different languages.
We then track expert-usage similarity between paired queries' evolution from the prompt tokens through progressively longer rollouts.

Fig.~\ref{fig:rollout_vs_expert_usage} reveals that during the prefill phase, all three query pairs show nearly identical expert usage. What happens during generation, however, depends on the query: for some pairs the similarity remains high throughout, while for others it drops sharply as the rollout lengthens. Critically, which regime applies is not predictable from the prompt alone. 

The implication is consequential for any analysis that inspects only the prompt: prefill-phase expert usage is not a reliable proxy for full generation usage pattern, as rollout content is unknown in advance, characterizing specialization requires observing complete trajectories.

\vspace{-4pt}
\subsection{Router collapse for prefilling stage}
\vspace{-4pt}

\begin{figure}[!t]
\centering
    \includegraphics[width=0.65\linewidth]{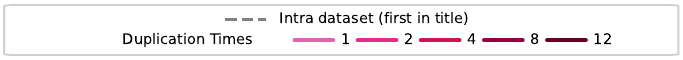}
    % \centering
    % \includegraphics[width=\linewidth]{figures/duplication/Ling-mini-base-2.0_expert_overlap_duplication_cosine_one_line.pdf}
    % \includegraphics[width=\linewidth]{figures/duplication/gpt-oss-20b_expert_overlap_duplication_cosine_one_line.pdf}
    \begin{subfigure}[t]{\textwidth}
    \centering
    \includegraphics[width=\linewidth]{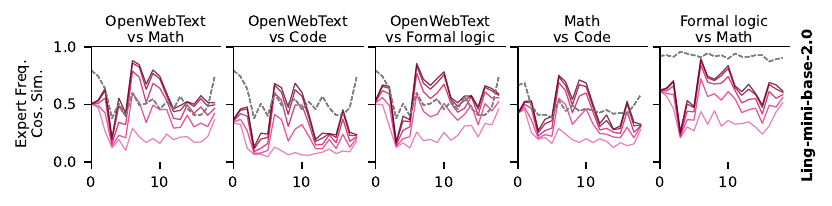}
     \caption{Ling-mini-base, no chat template}
     \label{fig:gpt_oss_duplication_no_chat_template}
    \end{subfigure}
    \begin{subfigure}[t]{\textwidth}
    \centering
     \includegraphics[width=\linewidth]{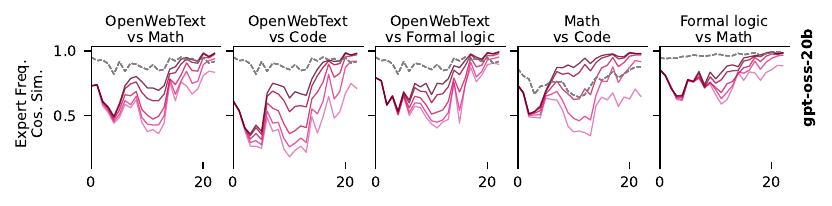}
     \caption{gpt-oss-20b, no chat template}
     \label{fig:gpt_oss_duplication_no_chat_template}
    \end{subfigure}
    \begin{subfigure}[t]{\textwidth}
        \includegraphics[width=\linewidth]{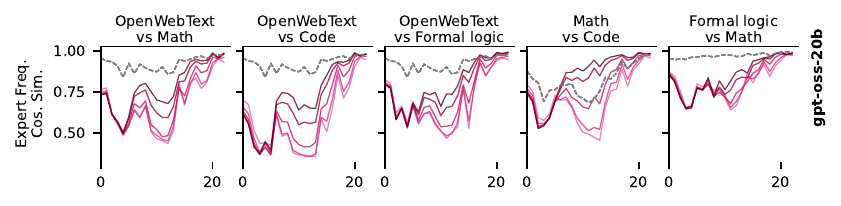}
        \caption{gpt-oss-20b, with chat template}
        \label{fig:gpt_oss_duplication_chat_template}
    \end{subfigure}
    \begin{subfigure}[t]{\textwidth}
        \includegraphics[width=\linewidth]{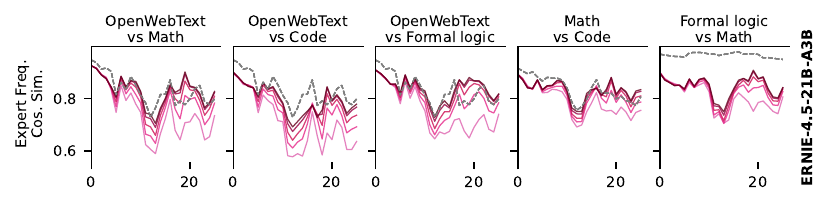}
        \caption{ERNIE-4.5-21B-A3B-Base, with chat template}
    \end{subfigure}

    \caption{\small{
    Cross-domain expert-usage similarity under increasing input duplication 
    (1$\times$ to 12$\times$) across models; dashed grey shows intra-domain similarity.
    For models except gpt-oss with chat template (\ref{fig:gpt_oss_duplication_chat_template}), hidden state similarity alone does not produce router collapse under no duplication. 
    Duplication amplifies the dominant hidden-state direction, progressively 
    collapsing cross-domain expert usage toward the intra-domain baseline. 
    % Additional models' results are shown in Fig.~\ref{fig:more_duplication_collapse}.
    % completely so in the deeper layers of \texttt{gpt-oss-20b}. 
    % Results for gpt-oss are shown in Fig.~\ref{fig:gpt_oss_duplication_with_chat_template} in the Appendix, .
    }}
    % When the model does not naturally show collapsed expert
    % Embedding collapses at prefill stage do not necessarily lead to router collapse.
    % However we can duplicate sequences to amplify the shared direction to make expert usage similar (mechanism explained in Fig.~\ref{fig:duplication_mechanism}) }
    % \xw{todo: look at the model output after duplication}}
    \label{fig:duplication_gpt_ling}
\end{figure}

Our third finding is perhaps the most counterintuitive: during prefilling, semantically unrelated sequences can activate exactly the same experts.

As an illustrative example, the bottom left of Fig.~\ref{fig:hle_vs_wlmbledon} shows that in the later layers of \texttt{gpt-oss-20b}, a question from the Humanity's Last Exam \citep[HLE,][]{phan2025humanity} and a passage on the history of Wimbledon engage an identical set of experts.
To probe how widespread this ``router collapse'' is, we adopt an extreme pruning protocol: using a single hand-written paragraph as the validation data, we retain only the top-12 experts at the bottom layers and examine NLL changes across a variety of corpora.
The first four columns of Fig.~\ref{fig:pruning} shows that pruning experts in the last two layers has a negligible effect (${<}10\%$ NLL increase) for prompt-only input across four different topics.

The picture changes once the model begins generating. Bottom right in Fig.~\ref{fig:hle_vs_wlmbledon} shows that expert-usage patterns indistinguishable during prefilling begin to diverge as the reasoning chain unfolds; and the last two columns of Fig.~\ref{fig:pruning} confirm that the pruned experts are highly consequential for the model output's (conditional) likelihoods, both on- and off-policy. The hidden state similarity across depth (Fig.~\ref{fig:gpt_oss_reasoning_question_hidden_state_sim}, appendix) further verifies the phase change.
% tells a similar story: CHidden states of prompt parts show  to $1$ in deeper layers while the rollout part shows less similarity.

We hypothesize that router collapse during prefilling is a property arising from gpt-oss's structured reasoning output: The input-invariant initial decoding tokens (examples shown in App.~\ref{app:gpt_sampled_output}) are largely controlled by the final-layer hidden states, which converge to similar representations across diverse prompts, causing the router to select the same experts. Consistent with this, we do not observe a comparable collapse in a non-reasoning instruction-tuned model (\texttt{ERNIE-4.5-21B-A3B-PT}; Fig.~\ref{fig:ernie_duplication_with_chat_template}).
Removing the chat template also reduces the collapse (lightest line, Fig.~\ref{fig:gpt_oss_duplication_no_chat_template} and leftmost column, top figure in Fig.~\ref{fig:gpt_oss_duplication_with_chat_template}), though expert-usage similarity still increases with depth in those models, likely due to the hidden state correlation discussed in the previous section.
Finally, Fig.~\ref{fig:duplication_gpt_ling} shows that this collapse is not inevitable: for models that do not exhibit it naturally, prefill-stage hidden state similarity alone is insufficient to produce router collapse. 
However, duplicating the input sequences (excluding the chat template) amplifies the shared direction in the hidden states enough to force expert-usage convergence even across semantically unrelated inputs (Fig.~\ref{fig:duplication_gpt_ling}).

Taken together, these three examples illustrate why deeper layers can behave differently from shallower ones: while early-layer hidden states reflect the token distribution and thus the input's semantics, later-layer representations may instead encode the model's anticipated output, rendering the mapping from surface content to expert usage opaque.

% Note that since this is a property induced by hidden state

\section{Conclusion}

% \paragraph{Summarization} 
Expert specialization is crucial for MoE's practical performance:
First, it confirms that the model is performing proper feature learning, producing diverse hidden states for different inputs.
Secondly, from the MoE perspective, expert specialization is a necessary condition for parameter efficiency: If the router fails to route different inputs to different experts, all experts will end up being duplicates of each other, effectively wasting parameters.
However, our observation indicates that, unless an explicit objective is introduced (e.g. \citet{li2026expert}), we should not expect the hidden state and router logit space to have a metric system aligned with human standards.

\bibliography{refs}
\bibliographystyle{colm2026_conference}

\newpage
\appendix

% \paragraph{Take away messages for pre-training} Proper depth scaling is important \citep{yang2023tensorprogramsvifeature}. In dense model, improper depth scaling causes inefficient depth usage; In MoE, it will also cause inefficient width usage.

% \paragraph{Take away messages for downstream tasks of MoE}

\section{Summarization of analysis metric}\label{app:all_metrics}

\paragraph{RMS Distance}
In Fig.~\ref{fig:rms}, we compared tokens' hidden states RMS distance
with their logits' RMS distance. Given two vectors
$\bx_i, \bx_j \in \mathbb{R}^{D}$, their RMS distance is defined as
\begin{equation}\label{eq:rms_def}
    \mathrm{RMSDist}(\bx_i, \bx_j)
    =
    \sqrt{\frac{1}{D}\sum_{d=1}^{D}\bigl(x_{i,d} - x_{j,d}\bigr)^{2}},
\end{equation}
where $x_{i,d}$ denotes the $d$-th coordinate of $\bx_i$.

% \section{Understanding GPT-OSS router}
% Denote the pre-MLP features of an $N$-token sequence by $\bX \in \mathbb{R}^{N\times D}$.
% Empirically, the rows of $\bX$ become increasingly correlated in deeper layers (Fig.~???). This can be problematic because correlated token representations may induce correlated router outputs and overlapping expert usage patterns, leading to router imbalance. However, in practice, we observe that the router resists this increase in embedding correlation, keeping its outputs and expert usage relatively diverse even in deeper layers (Fig.~???). This suggests that the router may internally suppress the dominant shared directions in $\bX$.

\paragraph{Directional energy and retained energy} In the left most and right most columns of Fig.~\ref{fig:gpt_oss_correlation_depth},\ref{fig:corr_appends_dpsk},\ref{fig:corr_appends_Moonlight},\ref{fig:corr_appends_Trinity}, we studied the directional energy $\mathrm{Energy}_{\ell:\rho}$ and the retained energy for the first directions across layers, averaged over 16 sequences.
For a $T$-token sequence, at a particular layer, let \(H \in \mathbb{R}^{T\times D}\) be a data matrix whose rows are \(\bh_1^\top,\dots,\bh_T^\top\), i.e. the hidden state of the $t$th token and let
\begin{equation}
H = \bm{U}\diag{s_1,\ldots,s_D}\bm{V}^\top,
\end{equation}
denotes its SVD. The directional energy is defined as
\begin{equation}\label{eq:directional_energy}
\mathrm{Energy}_{\ell:\rho}
=
\frac{\sum_{i=\ell}^{\rho} s_i^2}{\sum_{j=1}^{r} s_j^2},
\end{equation}
which measures how much of the total feature energy is captured by singular directions $\ell,\ldots,\rho$. 
Then, given a router $\Router$ and the $i$th right singular direction $\bm{v}_i$, we define
\begin{equation}\label{eq:retained_energy}
\mathrm{RetainedEnergy}_i
=
\frac{1}{T} \sum_{t=1}^{T}\frac{\|\Router \bm{v}_i \odot \bm{m}_t|_2^2}{\|\Router\|_F^2}.
\end{equation}
where $\bm{m}_t \in \{0,1\}^E$ is a binary vector denoting whether token $t$ usesd the $e$th vector.
Since
\begin{equation}
\|\Router \bm{v}_i \odot \bm{m}_t \|_2^2 
=
\sum_{e=1}^E \bm{m}_t^e \langle \router_e, \bm{v}_i\rangle^2,
\end{equation}
this quantity measures how strongly the router preserves direction $\bm{v}_i$. A small value of $\mathrm{RetainedEnergy}_i$ for dominant shared directions (e.g., $\bm{v}_1$) suggests that the router suppresses those directions, which may help maintain balanced expert usage.

% where the singular directions

% To study this, we project the router weights onto the leading right singular directions of $\bX$. In particular, let

% where $r = \text{rank}(\bX)$, $\bm{U} \in \mathbb{R}^{N\times r}$, $\bm{\Sigma} = \diag{s_1,\ldots,s_r} \in \mathbb{R}^{r\times r}$, and $\bm{V} = [\bm{v}_1,\ldots,\bm{v}_r] \in \mathbb{R}^{D\times r}$. Since $\bX$ is not centered before taking the SVD, $\bm{v}_1$ should be interpreted as the dominant right singular direction rather than the top PCA direction; in practice, it often aligns closely with the mean direction of $\bX$ across tokens.

% Now denote the router for $E$ experts by
% \begin{equation}
% \Router = [\router_1,\ldots,\router_E]^\top \in \mathbb{R}^{E\times D},
% \end{equation}
% where $\router_e^\top$ is the weight vector for expert $e$.

% We compare two quantities. First,
% \begin{equation}\label{eq:directional_energy}
% \mathrm{Energy}_{\ell:\rho}
% =
% \frac{\sum_{i=\ell}^{\rho} s_i^2}{\sum_{j=1}^{r} s_j^2},
% \end{equation}
% which measures how much of the total feature energy is captured by singular directions $\ell,\ldots,\rho$.

\paragraph{Average cosine similarity (inter-sequence)} Given a sequence of $T$ tokens, denote the hidden states / router logits at a particular layer as $\{\bx_t\}_{t=1}^T$ , average cosine similarity computes the average over the lower triangular of the kernel matrix, defined as:
\begin{equation}\label{eq:token_cos_def}
    \overline{\mathrm{cos}}(\{\bx_t\}) 
    = \frac{2}{T(T-1)} \sum_{i=2}^{T} \sum_{j=1}^{i-1}
      \frac{\bx_i^{\top} \bx_j}{\|\bx_i\|\,\|\bx_j\|}.
\end{equation}

\paragraph{Router Hamming Similarity}
Given a sequence of $T$ tokens, denote every token's expert usage at a
particular layer as $\{\bm{m}_t\}_{t=1}^{T}$, where
$\bm{m}_t \in \{0,1\}^{E}$ is a binary vector denoting whether token $t$
used the $e$-th expert. The router Hamming similarity is computed as
\begin{equation}\label{eq:hamming_def}
    \overline{\mathrm{Hamming}}(\{\bm{m}_t\})
    = \frac{2}{T(T-1)} \sum_{i=2}^{T} \sum_{j=1}^{i-1}
      \frac{\bm{m}_i^{\top} \bm{m}_j}{E},
\end{equation}
where we average over all pairwise Hamming similarity. Compared with Eq.~\eqref{eq:token_cos_def}, the hamming similarity measures expert usage similarity more directly.

\paragraph{Router confidence}
In Fig.~\ref{fig:ood_inputs}, we looked at the router confidence averaged
over 16 sequences. For each sequence of $T$ tokens, at a particular layer,
the router confidence is defined as the average maximum softmax probability
over all tokens:
\begin{equation}\label{eq:conf_def}
    \frac{1}{T} \sum_{t=1}^{T} \max_{e} \, \softmax(\Router \bh_t)_e,
\end{equation}
where $\Router$ denotes the router and $\bh_t$ denotes the hidden state.

% \text{RetainedEnergy}_{\ell:\rho}= \frac{1}{E}\sum_{e=1}^{E}\frac{\sum_{i=\ell}^\rho \router_e^\top \bm{v}_i}{\sum_{i=1}^r \router_e^\top \bm{v}_i}

\paragraph{Sequence-level expert frequency cosine similarity}
In Figs.~\ref{fig:pooled_embd}, \ref{fig:rollout_vs_expert_usage},
\ref{fig:duplication_gpt_ling}, we compared two different sequences'
expert usage frequency. In particular, for each sequence of $T$ tokens,
we count the frequency of token visits for each expert and normalize it
into a probability vector:
\begin{equation}
    \bp^{(s)} = \frac{1}{TK} \sum_{t=1}^{T} \bm{m}_t^{(s)}
    \in \Delta^{E-1},
\end{equation}
where $K$ denotes the number of activated experts and $\bm{m}_t^{(s)} \in \{0,1\}^E$ is the expert-usage vector of token
$t$ in sequence $s$, and $\Delta^{E-1}$ denotes the $(E-1)$-simplex.
Then we compute the cosine similarity between two frequency vectors
$\bp^{(s)}$ and $\bp^{(s')}$,
\begin{equation}\label{eq:seq_expert_freq_sim}
    \mathrm{FreqSim}(s, s')
    = \frac{\bp^{(s)\top} \bp^{(s')}}{\|\bp^{(s)}\|\,\|\bp^{(s')}\|},
\end{equation}
as a measure of sequence-level expert usage difference.

\paragraph{Pooled hidden states cosine similarity}
In Fig.~\ref{fig:pooled_embd}, the $x$-axis denotes the cosine similarity
between two sequences' pooled hidden states. Given two sequences $i$ and
$j$ of length $L_i$ and $L_j$ respectively, we first normalise each
token's hidden state at layer $\ell$:
\begin{equation}
    \tilde{\bh}_t^\ell = \frac{\bh_t^\ell}{\|\bh_t^\ell\|_2 + \varepsilon},
\end{equation}
and pool across tokens to obtain a sequence-level representation:
\begin{equation}
    \bar{\bh}_i^\ell = \frac{1}{L_i} \sum_{t=1}^{L_i} \tilde{\bh}_t^\ell.
\end{equation}
The pooled cosine similarity between sequences $i$ and $j$ is then:
\begin{equation}\label{eq:seq_pooled_sim}
    d_{i,j}^\ell = \cos\!\left(\bar{\bh}_i^\ell,\, \bar{\bh}_j^\ell\right),
\end{equation}
where $\varepsilon > 0$ is a small constant for numerical stability.

\paragraph{Overlap under top-$P$}
Given two sequences $s$ and $s'$, we look at the overlap ratio of their
top-$P$ expert sets. Using the empirical expert frequency vector
$\bp^{(s)}$ defined above, we define the top-$P$ expert set of sequence
$s$ as the smallest set of experts (sorted by decreasing frequency) whose
cumulative probability mass exceeds $P$:
\begin{equation}
    \mathcal{E}_P^{(s)} = \underset{S \subseteq [E]}{\arg\min} \; |S|
    \quad \text{s.t.} \quad
    \sum_{e \in S} p_e^{(s)} \geq P, \quad p_e^{(s)} \geq p_{e'}^{(s)}
    \; \forall e \in S, e' \notin S,
\end{equation}
and the overlap ratio between sequences $s$ and $s'$ is then their Jaccard similarity:
\begin{equation}\label{eq:overlap_at_p}
    \mathrm{Overlap}_P(s, s') = 
    \frac{|\mathcal{E}_P^{(s)} \cap \mathcal{E}_P^{(s')}|}{|\mathcal{E}_P^{(s)} \cup \mathcal{E}_P^{(s')}|}.
\end{equation}
Compared with cosine similarity, Eq.~\eqref{eq:seq_expert_freq_sim}, $\mathrm{Overlap}_P(s, s')$ measures expert usage similarity more directly, and the value is more interpretable.

% \paragraph{\Delta NLL}

\section{Expert properties arise from the embeddings}\label{app:proof}
Given two hidden states $x_i, x_j \in \R^D$ and the router $\Router \in \R^{E \times D}$.

Naively, we have
\begin{equation}
    \norm{\Router x_i - \Router x_j}_2 \leq \norm{\Router}_2 \norm{x_i - x_j}_2
\end{equation}
this bound is too loose, and it does not take into account data geometry. We can introduce an orthogonal base $Q \in \R^{D \times D}$, and we look at
\begin{equation}
     \norm{(\Router Q^\top) Q x_i -  (\Router Q^\top)Q x_j}_2 \leq \norm{\Router Q}_2 \norm{Q(x_i - x_j)}_2
\end{equation}

now if $x_i - x_j$ lies on $\text{span}(Q)$, we have $\norm{Q(x_i - x_j)}_2 = \norm{x_i - x_j}_2$.

To construct such $Q$, we could consider having $Q$ as the right singular matrix of $\bX$. This is formalized in the next lemma.

% Note that this bound could still be too loose as we are weighting each direction equally. Instead we can have 
% \begin{equation}
% Q = \sum_{i=1}^r \alpha_i \bm{v}_i\bm{v}_i^\top,~~\text{where } \alpha_i = \frac{\sigma_i}{\sum_{j=1}^r \sigma_j }
% \end{equation}

% ----- formal lemma\\

\begin{lemma}[Data-aware Lipschitz bound via principal subspace projection]\label{lemma:sharp_lipschitz}
Let \(X \in \mathbb{R}^{N\times D}\) be a data matrix whose rows are \(x_1^\top,\dots,x_N^\top\), and let
\[
X = U \Sigma V^\top
\]
be its singular value decomposition. Denote by $m=\min\{N,D\}$ the rank of $X$. Fix \(r \in \{1,2, \dots, m\}\), and let
\[
V_r \in \mathbb{R}^{D\times r}
\]
be the matrix formed by the top \(r\) right singular vectors of \(X\). Define the orthogonal projector onto the corresponding principal subspace by
\[
\Pi_r := V_r V_r^\top \in \mathbb{R}^{D\times D}.
\]
Let \(P \in \mathbb{R}^{E\times D}\) be the router weight matrix. Then, for any \(i,j \in \{1,\dots,N\}\),
\[
\|Px_i - Px_j\|_2
\le
\|P\Pi_r\|_2 \, \|\Pi_r(x_i-x_j)\|_2
+
\|P(I-\Pi_r)(x_i-x_j)\|_2.
\]
In particular,
\[
\|Px_i - Px_j\|_2
\le
\|P\Pi_r\|_2 \, \|x_i-x_j\|_2
+
\|P(I-\Pi_r)\|_2 \, \|(I-\Pi_r)(x_i-x_j)\|_2,
\]
and
\[
\|(I-\Pi_r)(x_i-x_j)\|_2^2
=
\sum_{k=r+1}^{D} \langle x_i-x_j, v_k\rangle^2.
\]
\end{lemma}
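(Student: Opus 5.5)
The plan is to split the difference $\delta := x_i - x_j$ along the orthogonal decomposition $I = \Pi_r + (I-\Pi_r)$ and then apply the triangle inequality and submultiplicativity of the operator norm. First we confirm that $\Pi_r = V_rV_r^\top$ is an orthogonal projector: since $V_r^\top V_r = I_r$ (the right singular vectors are orthonormal), $\Pi_r^2 = \Pi_r$ and $\Pi_r^\top = \Pi_r$. The same holds for $I-\Pi_r$, and $\Pi_r(I-\Pi_r)=0$. Then, by linearity,
\[
Px_i - Px_j = P\delta = P\Pi_r\delta + P(I-\Pi_r)\delta ,
\]
and the triangle inequality gives $\|P\delta\|_2 \le \|P\Pi_r\delta\|_2 + \|P(I-\Pi_r)\delta\|_2$.

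For the first term we use idempotence to write $P\Pi_r\delta = (P\Pi_r)(\Pi_r\delta)$. The operator-norm inequality then gives $\|P\Pi_r\delta\|_2 \le \|P\Pi_r\|_2\,\|\Pi_r\delta\|_2$, which is the main claim. This is the only nonobvious move, and it is what makes the bound data-aware: we insert $\Pi_r$ a second time rather than bounding $\|P\Pi_r\delta\|_2 \le \|P\Pi_r\|_2\|\delta\|_2$ directly.

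For the ``in particular'' statements, we first note that $\|\Pi_r\delta\|_2 \le \|\delta\|_2$ because orthogonal projectors are contractions: by Pythagoras, $\|\delta\|_2^2 = \|\Pi_r\delta\|_2^2 + \|(I-\Pi_r)\delta\|_2^2$. This weakens the first term to $\|P\Pi_r\|_2\,\|x_i-x_j\|_2$. For the residual, idempotence again gives $P(I-\Pi_r)\delta = P(I-\Pi_r)\,(I-\Pi_r)\delta$, and the operator-norm inequality bounds it by $\|P(I-\Pi_r)\|_2\,\|(I-\Pi_r)\delta\|_2$.

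Finally, we complete $V_r$ to an orthonormal basis $v_1,\dots,v_D$ of $\R^D$ (using the full $V$ from the SVD, extended if $N<D$). Then $I = \sum_{k=1}^D v_kv_k^\top$, so $(I-\Pi_r)\delta = \sum_{k>r}\langle\delta,v_k\rangle v_k$, and orthonormality gives the stated sum of squares. The only step needing care is this choice of a full orthonormal completion, since the thin SVD may supply only $m$ vectors. None of the steps is a real obstacle; everything is linear algebra.
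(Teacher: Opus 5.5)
Your proof is correct and follows the paper's route: you split $\delta$ into its $\Pi_r$ and $I-\Pi_r$ components, apply the triangle inequality and the operator-norm bound (with idempotence for the residual), and finish with Pythagoras over an orthonormal completion of $V_r$. Your extra step $P\Pi_r\delta = (P\Pi_r)(\Pi_r\delta)$ is actually needed for the main inequality as stated, since the paper's written proof bounds $\|P\Pi_r\delta\|_2 \le \|P\Pi_r\|_2\|\delta\|_2$ directly and so only establishes the weaker version with $\|x_i-x_j\|_2$ in the first term.
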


\begin{proof}
Fix \(i,j\), and define
\[
\Delta_{ij} := x_i - x_j \in \mathbb{R}^D.
\]
Since \(\Pi_r\) is the orthogonal projector onto \(\operatorname{span}(V_r)\), we have the decomposition
\[
\Delta_{ij} = \Pi_r \Delta_{ij} + (I-\Pi_r)\Delta_{ij}.
\]
Applying \(P\) to both sides yields
\[
P\Delta_{ij} = P\Pi_r \Delta_{ij} + P(I-\Pi_r)\Delta_{ij}.
\]
By the triangle inequality,
\[
\|P\Delta_{ij}\|_2
\le
\|P\Pi_r \Delta_{ij}\|_2 + \|P(I-\Pi_r)\Delta_{ij}\|_2.
\]
Using the operator norm bound on the first term,
\[
\|P\Pi_r \Delta_{ij}\|_2
\le
\|P\Pi_r\|_2 \, \|\Delta_{ij}\|_2,
\]
we obtain
\[
\|P\Delta_{ij}\|_2
\le
\|P\Pi_r\|_2 \, \|\Delta_{ij}\|_2
+
\|P(I-\Pi_r)\Delta_{ij}\|_2.
\]
Recalling that \(\Delta_{ij}=x_i-x_j\), this proves
\[
\|Px_i - Px_j\|_2
\le
\|P\Pi_r\|_2 \, \|x_i-x_j\|_2
+
\|P(I-\Pi_r)(x_i-x_j)\|_2.
\]

For the second bound, apply again the operator norm inequality:
\[
\|P(I-\Pi_r)(x_i-x_j)\|_2
\le
\|P(I-\Pi_r)\|_2\,\|(I-\Pi_r)(x_i-x_j)\|_2,
\]
where we have used the fact that $I - \Pi_r = (I - \Pi_r)^2$.\\

Now let \(v_1,\dots,v_D\) denote the columns of \(V\). Since \(\Pi_r = \sum_{k=1}^r v_k v_k^\top\), we have
\[
I-\Pi_r = \sum_{k=r+1}^D v_k v_k^\top.
\]
Therefore,
\[
(I-\Pi_r)(x_i-x_j)
=
\sum_{k=r+1}^D \langle x_i-x_j, v_k\rangle v_k.
\]
Because \(v_{r+1},\dots,v_D\) are orthonormal, it follows from Pythagoras' theorem that
\[
\|(I-\Pi_r)(x_i-x_j)\|_2^2
=
\sum_{k=r+1}^{D} \langle x_i-x_j, v_k\rangle^2.
\]

This concludes the proof.
\end{proof}

The result of \cref{lemma:sharp_lipschitz} suggests that when data variance is mostly concentrated on few principal directions, the lipschitz constant $\|P \Pi_r\|_2$ is much sharper than the naive constant $\|P\|_2$. Intuitively, this is the case when the hidden states exhibit significant correlation.

\subsection{Shared directions become routing-insignificant under load balancing.}\label{app:balance_loss_analysis}

Consider a router matrix \(P \in \R^{E\times D}\) with logits
\[
g(x)=Px \in \R^E,
\qquad
p(x)=\softmax(Px)\in \R^E.
\]
Since \(\softmax(z+\alpha \one)=\softmax(z)\) for every \(z\in\R^E\), \(\alpha\in\R\), only the component of \(Px\) orthogonal to \(\one\) matters for routing. Let
\[
\PiE := I_E - \frac1E \one \one^\top
\]
denote the orthogonal projector onto \(\one^\perp\). Then the \emph{effective} routing signal is \(\PiE Px\).

We study the simple correlated data model
\[
x_i = \mu + \xi_i,\qquad i=1,\dots,N,
\]
where \(\mu\in\R^D\) is a shared direction and \(\frac1N\sum_{i=1}^N \xi_i \approx 0\) (uncorrelated centered $\xi$'s). For tractability, we consider a simplified load-balancing loss 
\[
\mathcal L_{\mathrm{bal}}(P)
:=
\left\|
\frac1N \sum_{i=1}^N \softmax(Px_i) - \frac1E \one
\right\|^2.
\]
The following lemma shows that, in the small-logit regime, minimizing load balancing forces the shared direction \(\mu\) to become routing-insignificant.

\begin{lemma}\label{lemma:lbl_analyze}
Assume that \(\max_{1\le i\le N}\norm{P(\mu+\xi_i)} \le \varepsilon\) for some \(\varepsilon>0\), and that \(\frac1N\sum_{i=1}^N \xi_i \approx 0\). Then
\[
\frac1N \sum_{i=1}^N \softmax(Px_i)
\approx
\frac1E \one + \frac1E \PiE P\mu + O(\varepsilon^2),
\]
and therefore
\[
\mathcal L_{\mathrm{bal}}(P)
\approx
\frac{1}{E^2}\norm{\PiE P\mu}^2 + O(\varepsilon^3).
\]
In particular, any near-minimizer of \(\mathcal L_{\mathrm{bal}}\) must satisfy
\[
\PiE P\mu \approx 0
\qquad\Longleftrightarrow\qquad
P\mu \approx \overline{P\mu}\,\one,
\]
where \(\overline{P\mu}:=\frac1E \one^\top P\mu\).
\end{lemma}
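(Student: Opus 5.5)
The argument is a first-order Taylor expansion of softmax around the origin, followed by averaging over $i$ so that the centering assumption on the $\xi_i$ removes the idiosyncratic terms.

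\textbf{Step 1: expand softmax.} I would first record the expansion of softmax at $z=0$. Its value there is $\frac1E\one$, and its Jacobian there is
\[
J_0=\frac1E\Big(I_E-\frac1E\one\one^\top\Big)=\frac1E\PiE .
\]
Softmax is smooth with bounded second derivatives on a fixed ball. Hence for $\norm{z}\le\varepsilon$,
\[
\softmax(z)=\frac1E\one+\frac1E\PiE z+R(z),\qquad \norm{R(z)}\le C\varepsilon^2,
\]
with $C$ independent of $z$. By invariance to constant shifts, only $\PiE z$ contributes to the linear term, which is consistent with this formula.

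\textbf{Step 2: average over the data.} Next I would substitute $z_i=Px_i=P\mu+P\xi_i$, which satisfies $\norm{z_i}\le\varepsilon$ by assumption, and average over $i$:
\[
\frac1N\sum_i\softmax(Px_i)=\frac1E\one+\frac1E\PiE P\mu+\frac1E\PiE P\Big(\frac1N\sum_i\xi_i\Big)+\frac1N\sum_i R(z_i).
\]
The third term vanishes under the centering assumption. If one wants to track it rather than drop it, it is $O(\norm{P\bar\xi})$ with $\bar\xi=\frac1N\sum_i\xi_i$, and it would be absorbed into the $\approx$. The last term is $O(\varepsilon^2)$ by the uniform remainder bound. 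This gives the first display of the lemma.

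\textbf{Step 3: expand the loss.} Write $a=\frac1E\PiE P\mu$ and let $b$ be the $O(\varepsilon^2)$ remainder. Then $\mathcal L_{\mathrm{bal}}=\norm{a+b}^2=\norm a^2+2\langle a,b\rangle+\norm b^2$. We have $\norm a\le\frac1E\norm{P\mu}$. Averaging the hypothesis $\norm{P(\mu+\xi_i)}\le\varepsilon$ and using the centering gives $\norm{P\mu}\lesssim\varepsilon$, so $\norm a=O(\varepsilon)$. The cross term is therefore $O(\varepsilon^3)$ and $\norm b^2=O(\varepsilon^4)$, which yields $\mathcal L_{\mathrm{bal}}\approx E^{-2}\norm{\PiE P\mu}^2+O(\varepsilon^3)$.

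\textbf{Step 4: conclude.} Since $\mathcal L_{\mathrm{bal}}\ge 0$ and the leading term is a nonnegative quadratic, a near-minimizer (loss $o(\varepsilon^2)$ within the regime) must have $\norm{\PiE P\mu}^2\le E^2\,\mathcal L_{\mathrm{bal}}+O(\varepsilon^3)$, which is small. Finally, $\PiE P\mu=P\mu-\frac1E(\one^\top P\mu)\one$. So $\PiE P\mu\approx0$ is literally equivalent to $P\mu\approx\overline{P\mu}\,\one$.

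\textbf{Main obstacle.} The delicate step is making the orders consistent. The statement's $\approx$ bundles both the small-logit error and the imperfect centering $\bar\xi\neq0$. I would make this precise by stating the bound explicitly as
\[
\big|\mathcal L_{\mathrm{bal}}-E^{-2}\norm{\PiE P\mu}^2\big|\le C\big(\varepsilon^3+\varepsilon\norm{P\bar\xi}\big),
\]
and by requiring the uniform second-order bound on softmax, e.g.\ $\norm{\nabla^2\softmax}\le 1$ on the ball. The claim that near-minimizers must be of this form relative to $\varepsilon^2$ is an interpretive statement, and I would phrase it as this inequality.
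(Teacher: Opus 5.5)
Your proposal is correct and follows the paper's proof step for step: linearize softmax at $0$ with Jacobian $\frac1E\PiE$, average so the centered $\xi_i$ terms drop out, then square and use $\norm{\PiE P\mu}=O(\varepsilon)$ to make the cross term $O(\varepsilon^3)$. Your derivation of $\norm{P\mu}=O(\varepsilon)$ by averaging the hypothesis fills in a step the paper only asserts, though the explicit error bound in your last paragraph should also carry a $\norm{P\bar\xi}^2$ term, which is harmless once $\norm{P\bar\xi}=O(\varepsilon)$.
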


\begin{proof}
Using the Taylor expansion of \(\softmax\) at \(0\), for any \(z\in\R^E\) with \(\norm{z}\le \varepsilon\),
\[
\softmax(z)
=
\frac1E \one + \frac1E \PiE z + O(\norm{z}^2),
\]
where the \(O(\norm{z}^2)\) term is uniform for \(\norm{z}\le \varepsilon\). Applying this with \(z_i=P(\mu+\xi_i)\), we get
\[
\softmax(Px_i)
=
\frac1E \one + \frac1E \PiE P(\mu+\xi_i) + O(\varepsilon^2).
\]
Averaging over \(i\) and using \(\frac1N\sum_i \xi_i=0\),
\[
\frac1N \sum_{i=1}^N \softmax(Px_i)
=
\frac1E \one + \frac1E \PiE P\mu + O(\varepsilon^2).
\]
Subtracting \(\frac1E\one\) and squaring yields
\[
\mathcal L_{\mathrm{bal}}(P)
=
\left\|
\frac1E \PiE P\mu + O(\varepsilon^2)
\right\|^2
=
\frac1{E^2}\norm{\PiE P\mu}^2 + O(\varepsilon^3),
\]
since \(\norm{\PiE P\mu}=O(\varepsilon)\) in the small-logit regime. This proves the claim.
\end{proof}

\paragraph{Geometric interpretation.}
The condition $P\mu = \overline{P\mu}\,\one$
means that the shared direction \(\mu\) contributes the \emph{same} logit shift to every expert. Hence \(\mu\) induces no relative preference between experts and is invisible to routing. Equivalently,
\[
\PiE P\mu = 0,
\]
so \(\mu\) lies in the kernel of the discriminative part of the router \(\PiE P\). Writing the rows of \(P\) as \(p_1^\top,\dots,p_E^\top\), this is also equivalent to
\[
\langle p_e-p_{e'},\mu\rangle = 0
\qquad \forall e,e'\in\{1,\dots,E\},
\]
showing that all expert-separating directions are orthogonal to \(\mu\). Thus the routing decision is invariant along the shared direction.

This gives a simple theoretical explanation for why highly correlated inputs tend to make the router suppress shared directions. In the model \(x_i=\mu+\xi_i\), when \(\mu\) is large, every token carries the same strong common component. Load balancing discourages such common structure from affecting routing, because it would create the same expert bias across the batch. The optimal response is therefore to make \(\mu\) \emph{routing-insignificant}, namely \(P\mu \in \mathrm{span}(\one)\). Since adding a multiple of \(\one\) does not change softmax, the router effectively ignores the shared direction and routes using only the residual variations \(\xi_i\). In more geometric terms, balanced routing factors through the quotient that removes the common correlated direction.

This analysis considered a router $P$ trained to achieve load-balancing only. Realistically, the training of $P$ is conducted simultaneously with other weight matrices in pretraining, and the load-balancing loss is an auxiliary loss. Therefore, the conclusions from this analysis should only be partially observed in practice. Specifically, instead of the router $P$ making high correlation directions fully insignificant, it should be expected that $P$ \emph{downscales} the magnitude of those directions instead.

\section{Example model outputs}

\subsection{GPT-OSS-20b}\label{app:gpt_sampled_output}
\begin{lstlisting}
<Q>Model input:</Q>
Summarize the following paragraph: [history of wimbledon]

<Q>Model output:</Q>
<|channel|>analysis<|message|>We need to summarize the text. Very long history. 
\end{lstlisting}

\begin{lstlisting}
<Q>Model input:</Q>
Think step by step, answer the question: [HLE question]

<Q>Model output:</Q>
<|channel|>analysis<|message|>We need to interpret the problem:
\end{lstlisting}

\subsection{Duplication}

Under duplications, we considered an instruction tuned model, while we did not see outputs of different inputs collapse to the same output (despite expert usage similarity increases), we did observe the output to lose randomness under random decoding.

\begin{lstlisting}
<Q>Duplication x 1:</Q>
<K>Query 1 output:</K>

If you're a coder: What language are you coding in, and what does your current stack look like?

Some key shifts:
- More attention on internal tooling (infrastructure, DevOps, internal APIs)
- Higher value on high-quality, maintainable code
- Smaller, faster iterations with
******************************
<K>Query 2 output:</K>
```
#@title Sampling
def p_sample_loop_score(model, shape, clip_denoised=True, denoised_fn=None, transform=None):
    '''
    A generic function to follow the denoising refinemnet API.
    '''
    assert transform is not None
    b =
------------------------------------------------------------
<Q>Duplication x 4:</Q>
<K>Query 1 output:</K>

OpenAI and xAI are finally deprioritizing side quests and focusing on coding, while Anthropic has been focused on coding from the start.

We're likely heading toward 3-4 top-tier coding models (including Gemini), it's not a winner-take-all market. Each model will excel
******************************
<K>Query 2 output:</K>

1. The Forward Process: The SDEThe forward diffusion process $\{ \mathbf{x}_t \}_{t=0}^T$ transforms a data sample $\mathbf{x}_0 \sim p_{data}(\mathbf{x})$ into Gaussian noise $\mathbf{x}_T \sim \mathcal{N}(\mathbf{0},
------------------------------------------------------------
<Q>Duplication x 16:</Q>
<K>Query 1 output:</K>

OpenAI and xAI are finally deprioritizing side quests and focusing on coding, while Anthropic has been focused on coding from the start.

We're likely heading toward 3-4 top-tier coding models (including Gemini), it's not a winner-take-all market. Each model will excel
******************************
<K>Query 2 output:</K>

1. The Forward Process: The SDEThe forward diffusion process $\{ \mathbf{x}_t \}_{t=0}^T$ transforms a data sample $\mathbf{x}_0 \sim p_{data}(\mathbf{x})$ into Gaussian noise $\mathbf{x}_T \sim \mathcal{N}(\mathbf{0},
------------------------------------------------------------
<Q>Duplication x 32:</Q>
<K>Query 1 output:</K>

OpenAI and xAI are finally deprioritizing side quests and focusing on coding, while Anthropic has been focused on coding from the start.

We're likely heading toward 3-4 top-tier coding models (including Gemini), it's not a winner-take-all market. Each model will excel
******************************
<K>Query 2 output:</K>

1. The Forward Process: The SDEThe forward diffusion process $\{ \mathbf{x}_t \}_{t=0}^T$ transforms a data sample $\mathbf{x}_0 \sim p_{data}(\mathbf{x})$ into Gaussian noise $\mathbf{x}_T \sim \mathcal{N}(\mathbf{0},
------------------------------------------------------------
\end{lstlisting}

\newpage

\begin{figure}
    \centering
    \includegraphics[width=.495\linewidth]{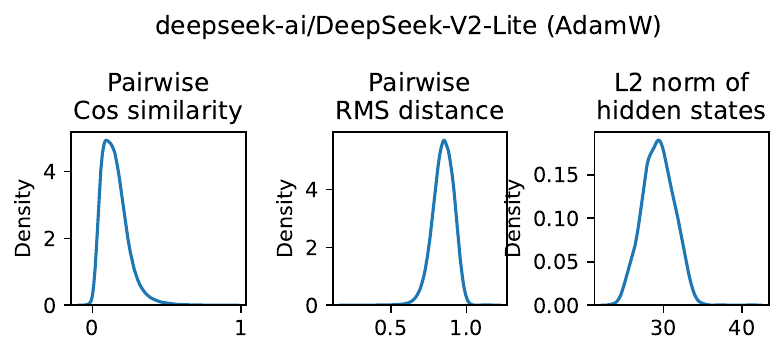}
    \includegraphics[width=.495\linewidth]{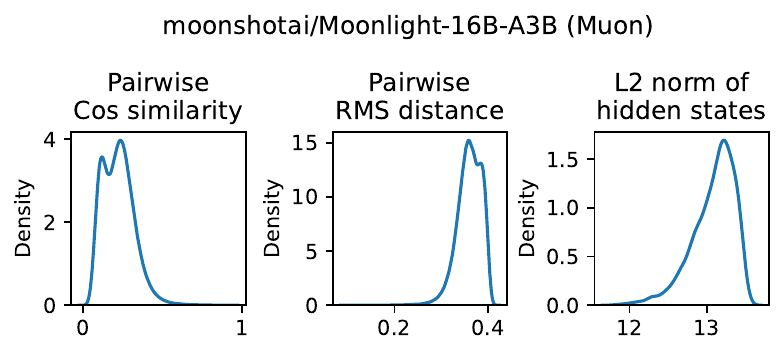}
    \caption{\textbf{Same model architecture, different optimizers induce different feature space properties.} At the same layer, given the same sequence, both models show a similar distribution of pairwise cosine similarity (over all tokens). However, embeddings in the model trained with AdamW show, on average, a higher RMS distance, indicating that different tokens exhibit high variation in norm (25 - 35). In contrast, Muon trained model shows more concentrated distribution of feature norm (12 - 14) and smaller pairwise RMS distances.}
    \label{fig:muon_vs_adamw}
\end{figure}

% \section{Token similarity increases across depth, router combined with auxiliary-loss cancels out shared direction}

\begin{figure}
    \centering
    \includegraphics[width=\linewidth]{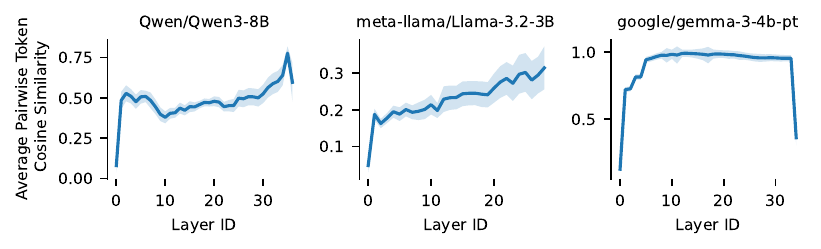}
    \caption{On dense model, except for Gemma3, which uses sandwich norm, token similarities also increase with depth.
}
    \label{fig:token_sim_vs_depth_dense}
\end{figure}

\begin{figure}
    \centering
\includegraphics[width=0.45\linewidth]{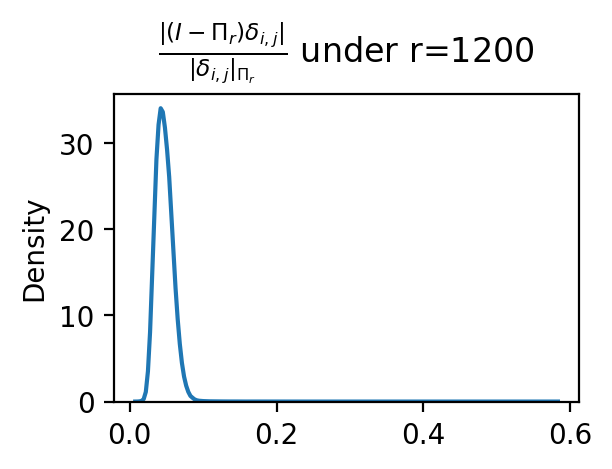}
\includegraphics[width=0.45\linewidth]{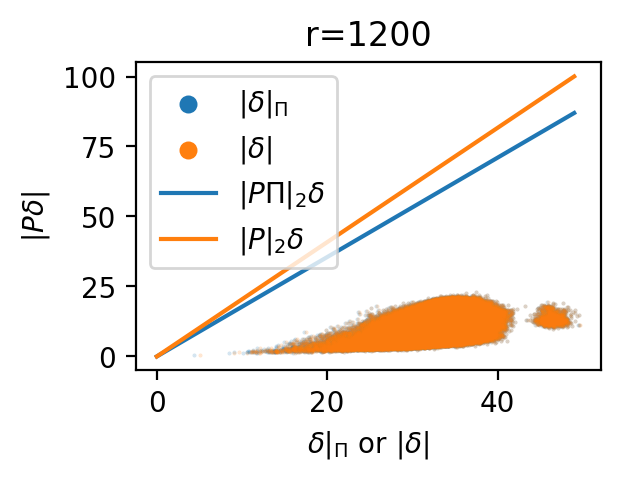}
\caption{
On Deepseek v2 lite, empirical comparison of the naive bound 
$\|P\|_2 \|\delta_{i,j}\|$ versus the tighter projection-based bound 
$\|P\Pi_r\|_2 \|\delta_{i,j}\|_{\Pi_r}$ from 
Proposition~\ref{prop:embed_logit_bound}, 
where $\delta_{i,j} = \boldsymbol{h}_i - \boldsymbol{h}_j$.
\textbf{Left:} the residual ratio 
$|(I - \Pi_r)\delta_{i,j}| \,/\, |\delta_{i,j}|_{\Pi_r}$ 
is sharply concentrated near zero under $r = 1200$, confirming that the 
residual term $\|P(I - \Pi_r)\delta_{i,j}\|_2$ is negligible and the 
bound in Eq.~\eqref{eq:norm_bound} is approximately tight with respect to the SVD 
truncation.
\textbf{Right:} the actual router logit differences $|P\delta|$ (orange 
dots) fall far below both upper bounds. While the projection-based bound 
$\|P\Pi_r\|_2\,|\delta|_{\Pi_r}$ (blue line) is theoretically tighter 
than the naive bound $\|P\|_2\,|\delta|$ (orange line), $P$ is already well-aligned with the principal subspace of the data, the gap between the two bounds is less significant in pre-trained models.
}
\label{fig:bound_viz}
\end{figure}

\begin{figure}
    \centering
\begin{subfigure}[t]{0.45\textwidth}
        \includegraphics[width=\linewidth]{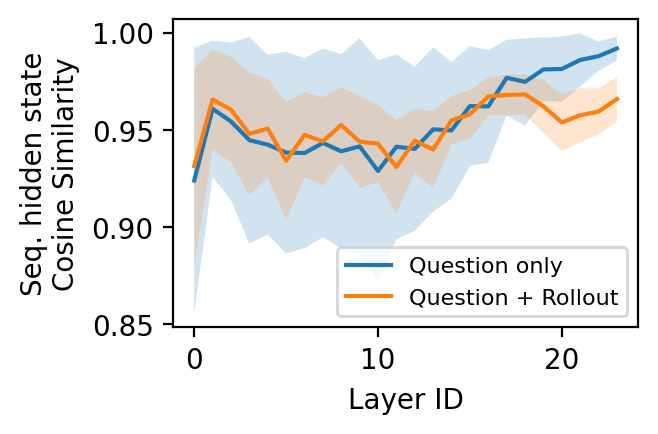}
\caption{gpt-oss-20b}
 \label{fig:gpt_oss_reasoning_question_hidden_state_sim}
\end{subfigure}
\begin{subfigure}[t]{0.45\textwidth}
    \includegraphics[width=\linewidth]{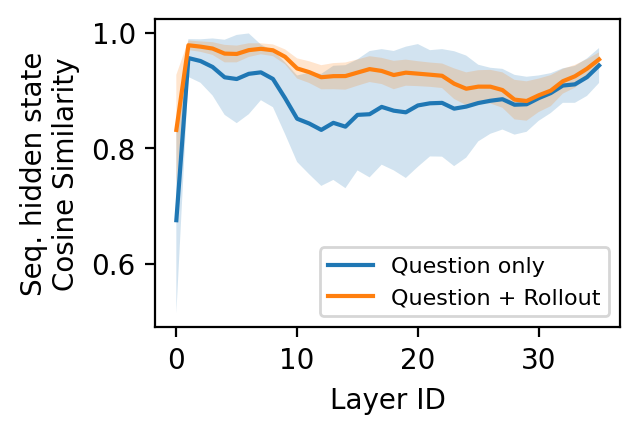}
\caption{Qwen3-4B-Thinking-2507}
\end{subfigure}
    \caption{Hidden states similarity (measured by Eq.~\eqref{eq:seq_pooled_sim}) v.s. depth for the prompt / question part v.s. question + rollout on two different reasoning models tested on 10 questions from HLE. \textbf{Left} On gpt-oss, the similarity of the rollout part falls below the question-only line and does not show collapsing, aligning with the observations in Fig.~\ref{fig:hle_vs_wlmbledon}.
    \textbf{Right} On a dense reasoning model from different lab, while we did see an increment of hidden state similarity across depth, we did not observe a clear separation in deeper layers.
    }
        \label{fig:placeholder} 
\end{figure}

\begin{figure}
    \centering
    \begin{subfigure}[t]{\textwidth}
    \centering
     \includegraphics[width=\linewidth]{figures/correlation/gpt_oss_router_inspect.pdf}
     \caption{GPT-OSS-20B (post-trained): \textbf{AdamW + explicit auxiliary loss}}\label{fig:corr_appends_gpt}
    \end{subfigure}
    \begin{subfigure}[t]{\textwidth}
    \centering
     \includegraphics[width=\linewidth]{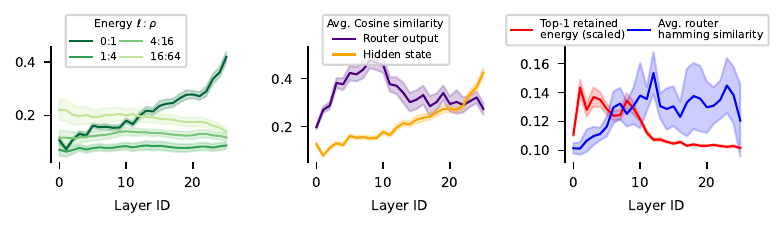}
     \caption{DeepSeek-V2-Lite (base): \textbf{AdamW + explicit auxiliary loss}}\label{fig:corr_appends_dpsk}
    \end{subfigure}
    \begin{subfigure}[t]{\textwidth}
    \centering
     \includegraphics[width=\linewidth]{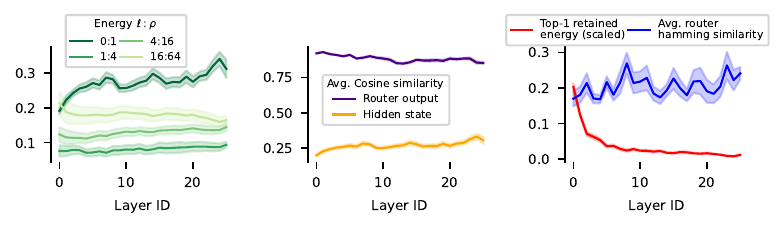}
     \caption{Moonlight 16B (structure identical to DeepSeek-V2-Lite): \textbf{Muon + auxiliary-loss-free}}\label{fig:corr_appends_Moonlight}
    \end{subfigure}
    \begin{subfigure}[t]{\textwidth}
    \centering
     \includegraphics[width=\linewidth]{figures/correlation/arcee_router_inspect.pdf}
     \caption{Trinity-Mini-Base:  \textbf{Muon + auxiliary-loss-free}}\label{fig:corr_appends_Trinity}
    \end{subfigure}
    \label{fig:muon_trained}
    \caption{Similar to Fig.~\ref{fig:gpt_oss_correlation_depth} in the main text, but for more models}
\end{figure}

\begin{figure}
\centering
\includegraphics[width=\linewidth]{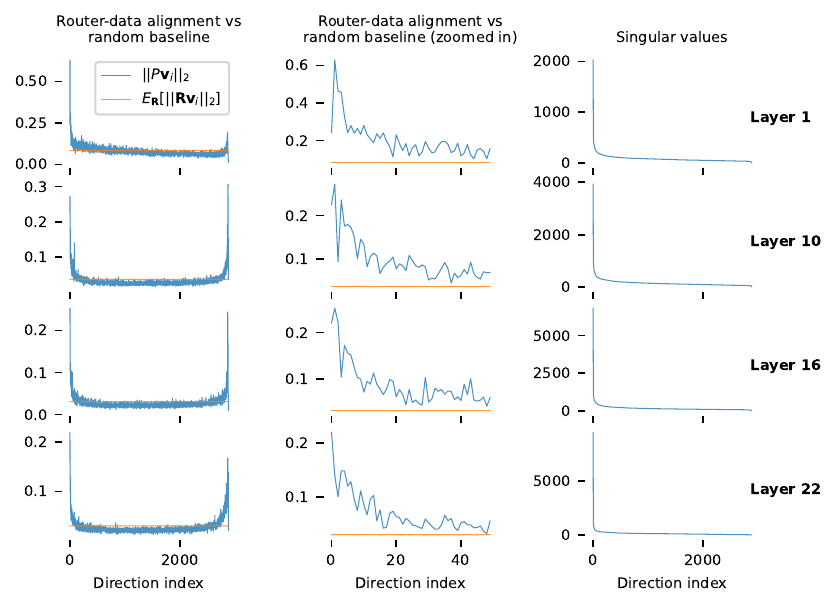}
\caption{gpt-oss-20b (w/ auxiliary loss)}
\label{fig:alignment:gpt_oss_router_data_alignment}
\end{figure}

\begin{figure}
\centering
\includegraphics[width=\linewidth]{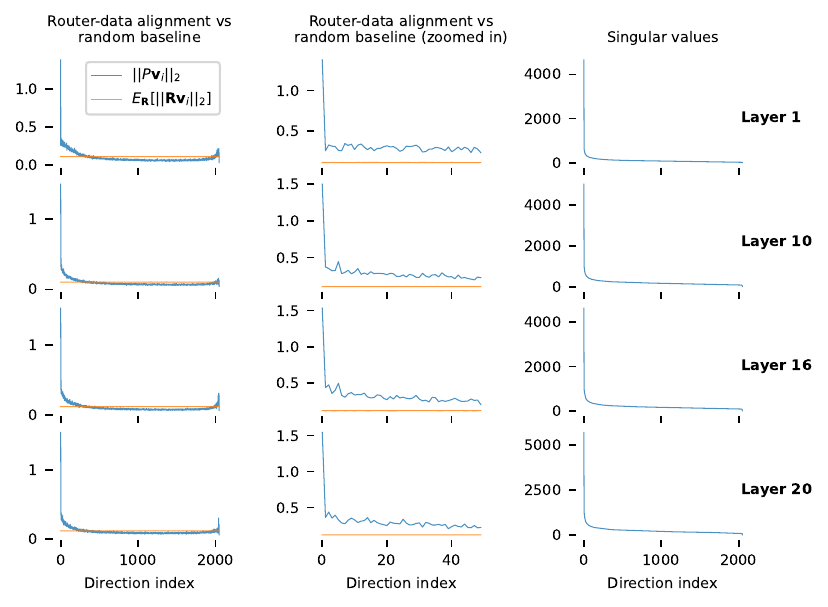}
\caption{Trinity Mini Base (Muon, no auxiliary loss)}
\label{fig:alignment:trinity_router_data_alignment}
\end{figure}

\begin{figure}
\centering
\includegraphics[width=\linewidth]{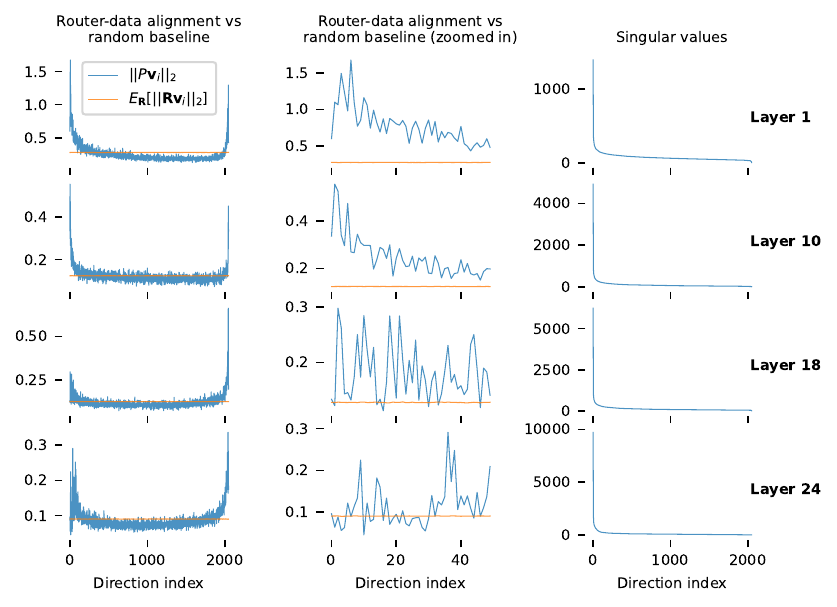}
\caption{DeepSeek V2 Lite (AdamW, w/ auxiliary loss)}
\label{fig:alignment:deepseek_router_data_alignment}
\end{figure}

\begin{figure}
\centering
\includegraphics[width=\linewidth]{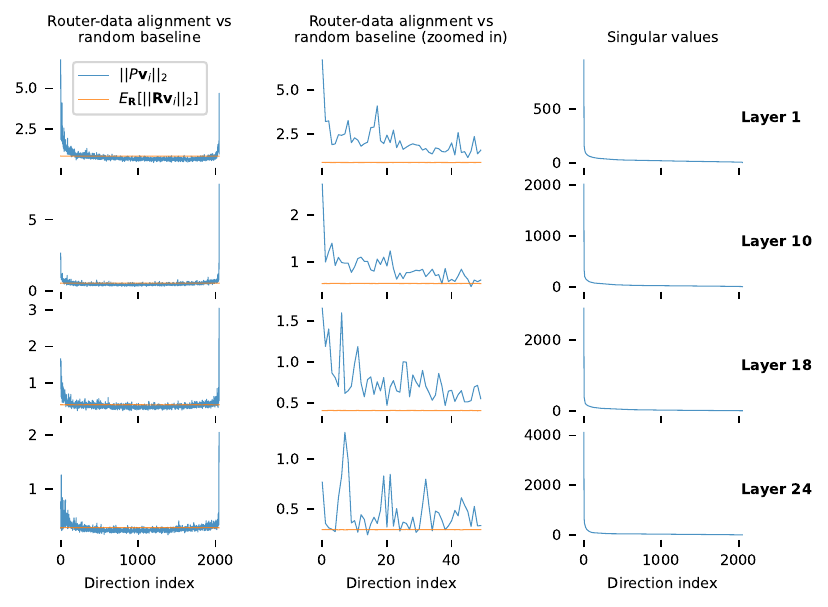}
\caption{Moonlight 16B A3B, same architecture as DeepSeek V2 Lite, trianed with Muon and no auxiliary loss}
\label{fig:alignment:moonlight_router_data_alignment}
\end{figure}

\begin{figure}
    \centering
    \includegraphics[width=\linewidth]{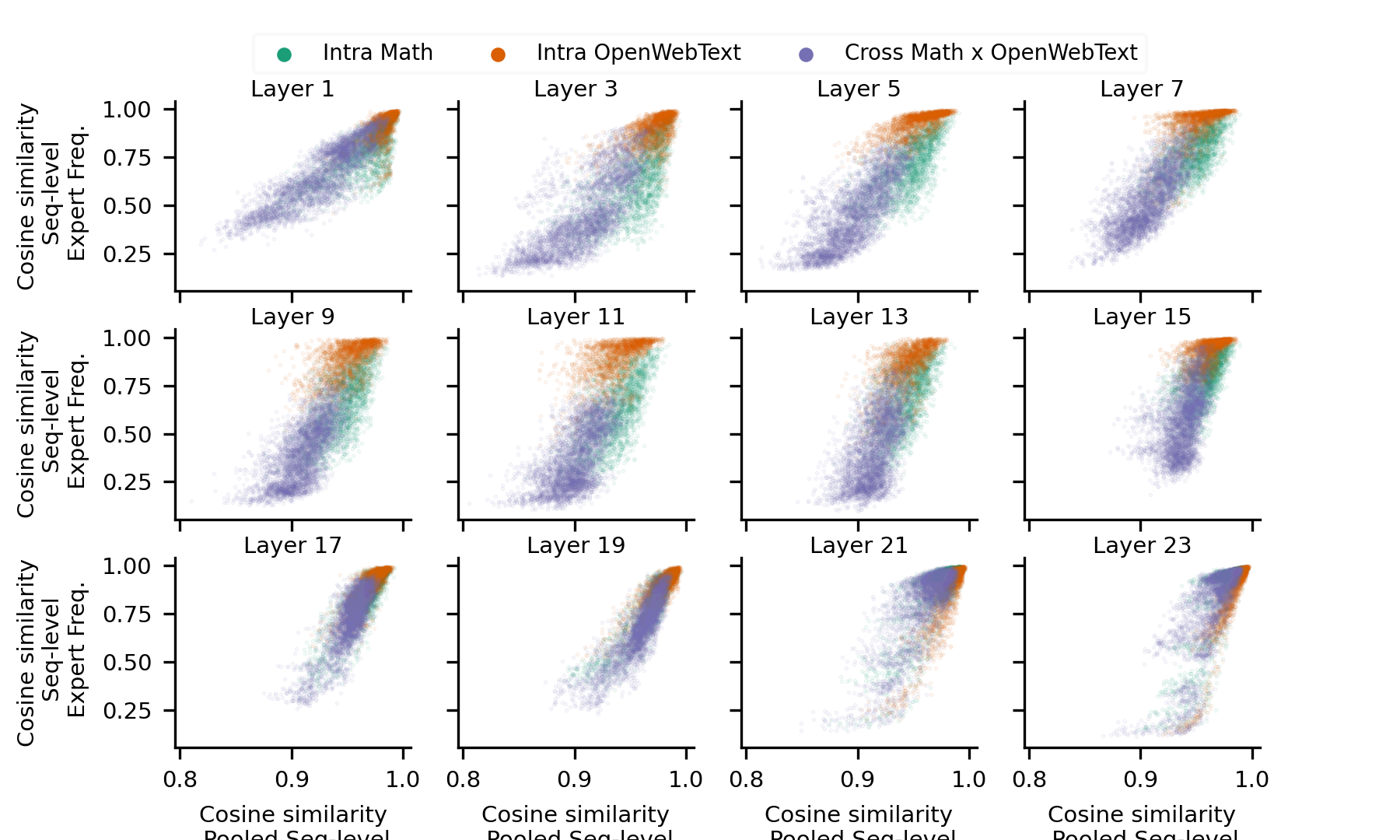}
    \caption{gpt-oss-20b}
    \label{fig:seq_emb_vs_expert_gpt}
\end{figure}

\begin{figure}
    \centering
    \includegraphics[width=\linewidth]{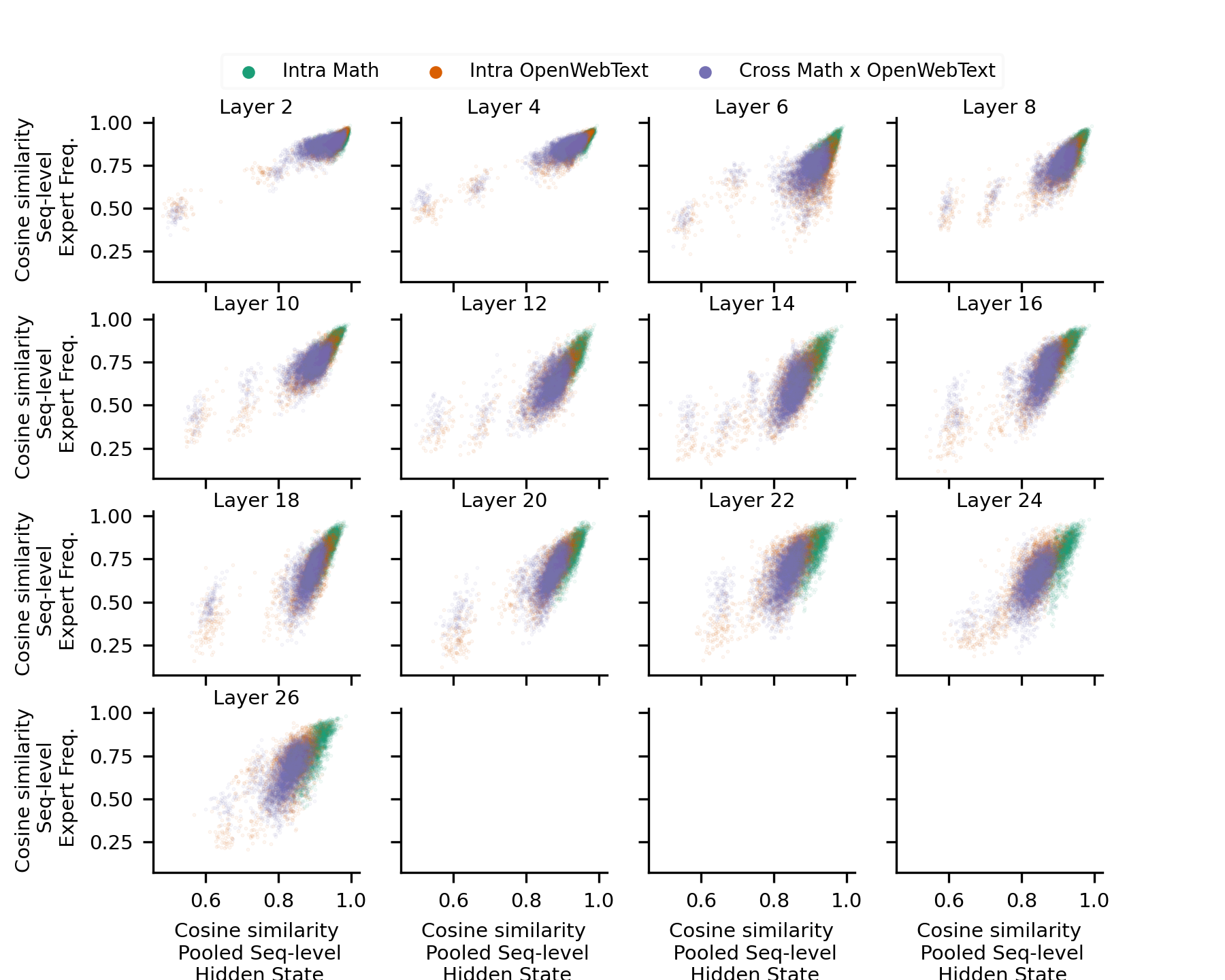}
    \caption{Ernie 4.5 21B-A3B-Base}
    \label{fig:seq_emb_vs_expert_ernie}
\end{figure}

\begin{figure}
    \centering
    \includegraphics[width=\linewidth]{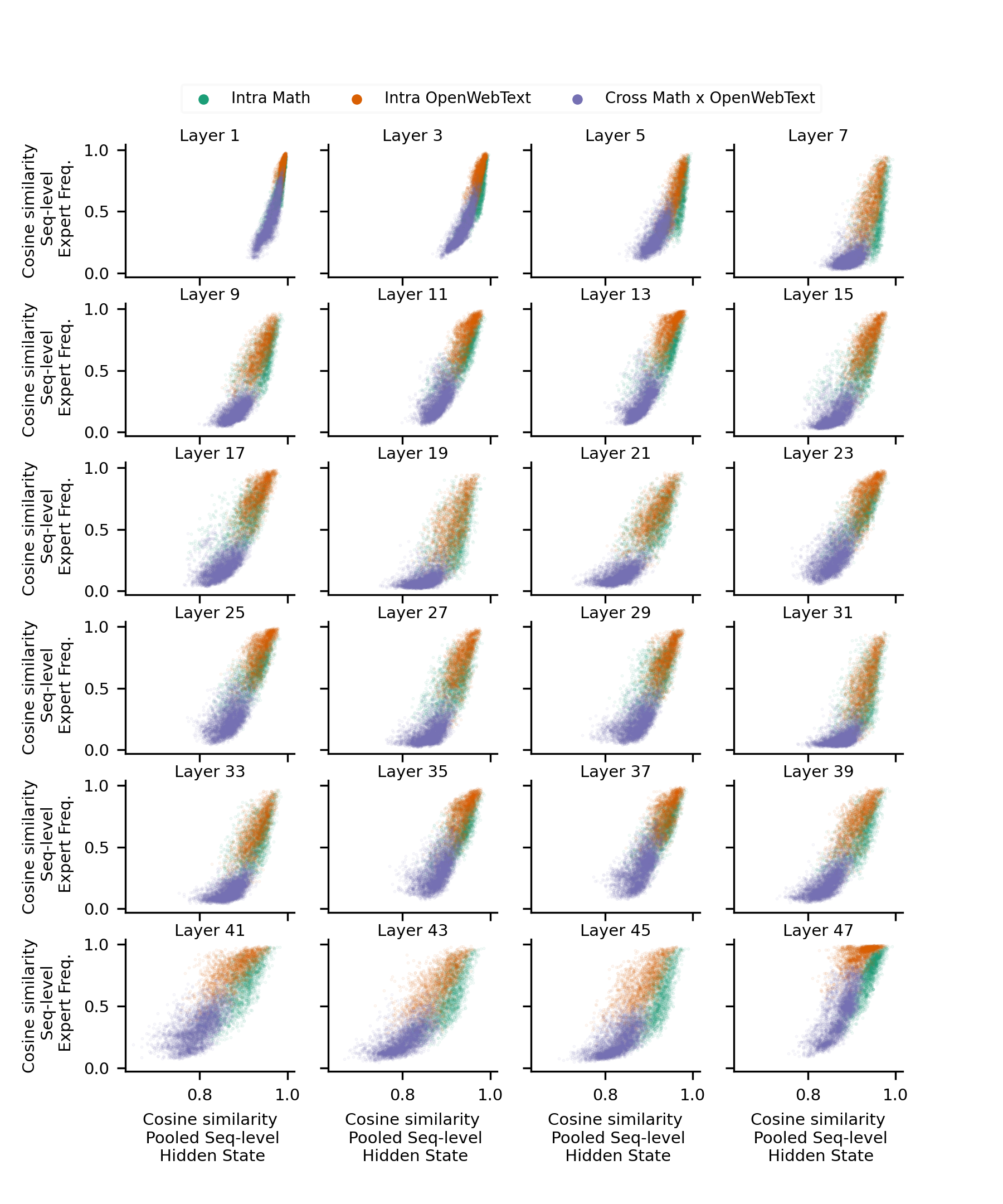}
    \caption{Qwen3-30B-A3B}
    \label{fig:seq_emb_vs_expert_qwen}
\end{figure}

\begin{figure}
    \centering
    \includegraphics[width=.45\linewidth]{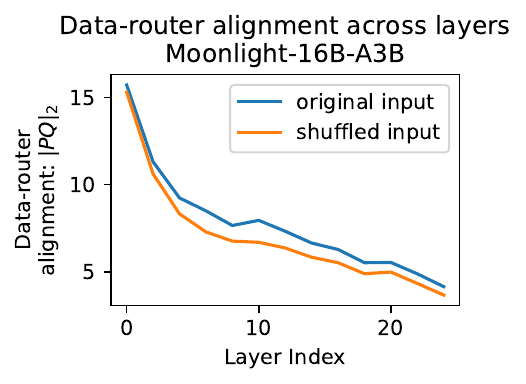}
    \includegraphics[width=.45\linewidth]{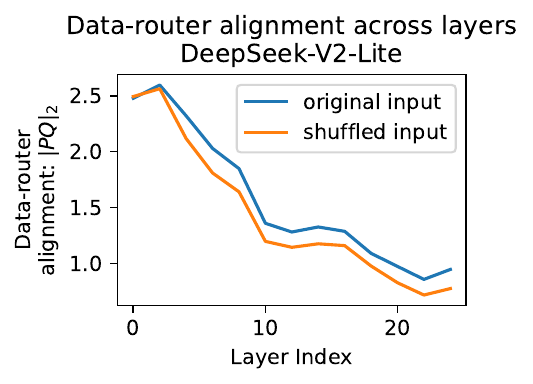}
    \caption{OOD inputs reduce router data alignment ($\norm{\Router Q}_2$), where $Q = V_r V_r^\top$ denotes the orthogonal projector onto the subspace spanned by the data's top $r$ right singular vectors.}
    \label{fig:ood_mech}
\end{figure}

\begin{figure}
    \centering
    \includegraphics[width=.49\linewidth]{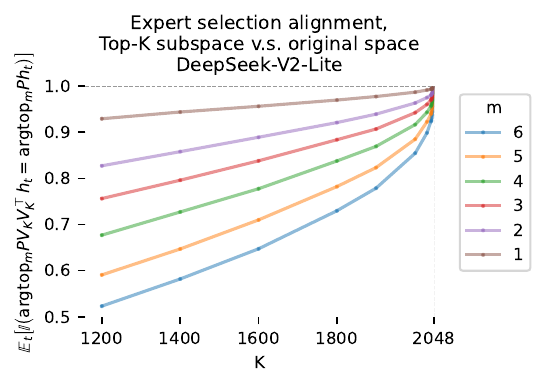}
    \includegraphics[width=.49\linewidth]{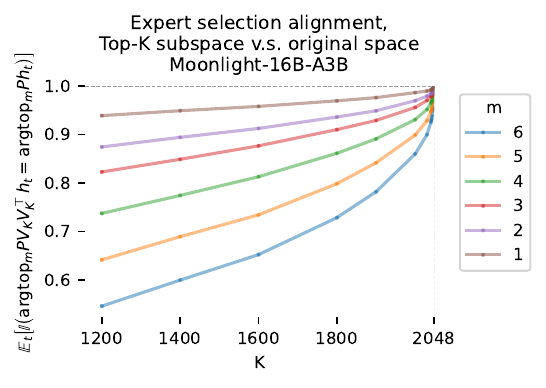}
    \caption{Keeping only top-K directions (${V}_K$) of the data (i.e. the post attention hidden states, denoted by $h_t$), we study the ratio of expert usage agreement compared with the original inputs, averaged over 50,000 tokens from \texttt{nvidia/Nemotron-CC-Math-v1} at the $10$th layer of each model.
    The expert receiving the largest logits (brown line, $m=1$) shows high agreement over all tokens.
    Lower rank experts (e.g. the 6th expert, $m=6$) demonstrate a higher degree of perturbations when the tail directions are removed; more than 40\% of the tokens end up using a different expert compared with the original inputs at $K=1200$ (out of $2048$ directions in total).
    }
    \label{fig:top_k_subspace_router_alignment}
\end{figure}

\newpage
\begin{figure}
    \centering
    \includegraphics[width=\linewidth]{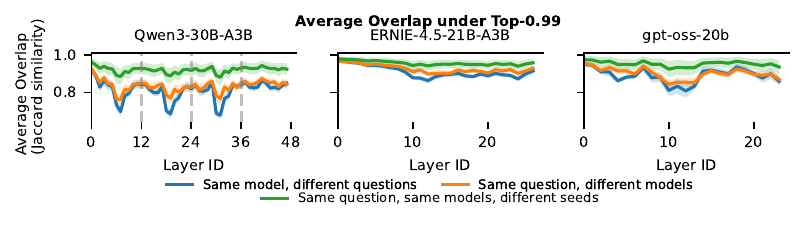}
    \includegraphics[width=\linewidth]{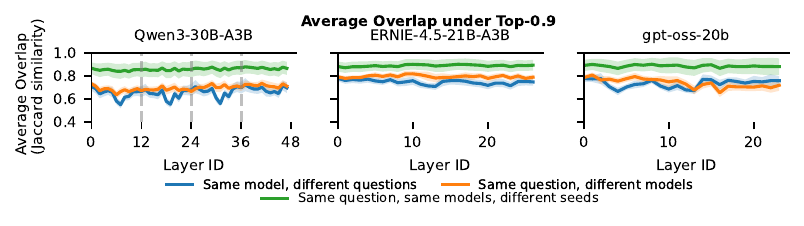}
    \includegraphics[width=\linewidth]{figures/math_arena/policy_ablation_0.8.pdf}
    \includegraphics[width=\linewidth]{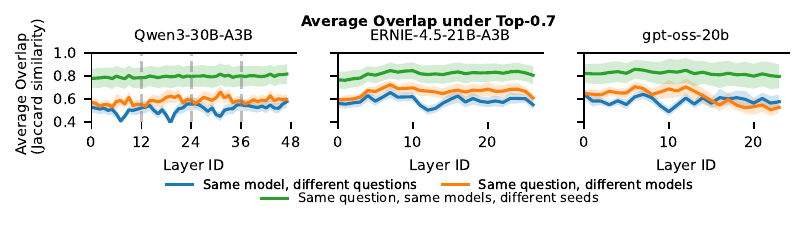}
    \includegraphics[width=\linewidth]{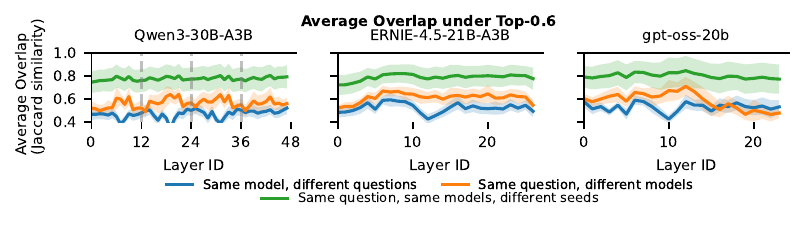}
    \caption{Similiar to Fig.~\ref{fig:math_arena} in the main text, but for more values of $p$.}
    \label{fig:overlap_at_p_more}
\end{figure}

\newpage
\begin{figure}
    \centering
    \includegraphics[width=\linewidth]{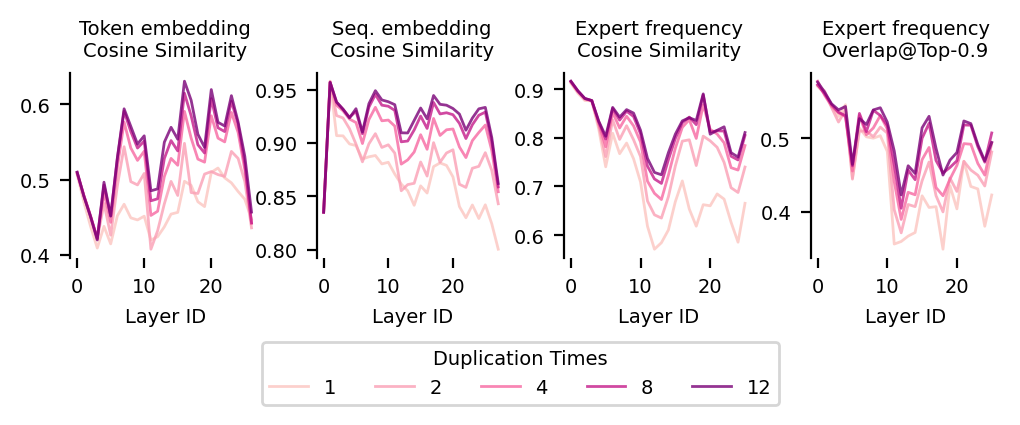}
    \caption{On Ernie-4.5, duplicating sequences increases token hidden states similarity (first column), i.e. stronger shared direction, resulting in higher sequence hidden state similarity (second column). The increment in hidden state space results in increased expert usage similarity (last two columns).}
    \label{fig:duplication_mechanism}
\end{figure}

% \begin{figure}
% \centering
%     \includegraphics[width=0.65\linewidth]{figures/duplication/legend_duplication_times.pdf}
%     \centering
    
%     \begin{subfigure}[t]{\textwidth}
%     \centering
%      \includegraphics[width=\linewidth]{figures/duplication/gpt-oss-20b_expert_overlap_duplication_cosine_one_line.pdf}
%      \caption{gpt-oss-20b, no chat template}
%      \label{fig:gpt_oss_duplication_no_chat_template}
%     \end{subfigure}
%     \begin{subfigure}[t]{\textwidth}
%         \includegraphics[width=\linewidth]{figures/duplication/gpt-oss-20b_expert_overlap_duplication_cosine_with_chat_template_one_line.pdf}
%         \caption{gpt-oss-20b, with chat template}
%         \label{fig:gpt_oss_duplication_chat_template}
%     \end{subfigure}
%     \begin{subfigure}[t]{\textwidth}
%         \includegraphics[width=\linewidth]{figures/duplication/ERNIE-4.5-21B-A3B_expert_overlap_duplication_cosine_with_chat_template_one_line.pdf}
%         \caption{ERNIE-4.5-21B-A3B-Base, with chat template}
%     \end{subfigure}
%     \caption{Similar to Fig.~\ref{fig:duplication_gpt_ling}, but we consider more models. When no duplication is introduced, gpt-oss-20b shows more collapse with chat template (b) compared with no chat template (a).}
%     \label{fig:more_duplication_collapse}
% \end{figure}

\begin{figure}
    \centering
    \includegraphics[width=\linewidth]{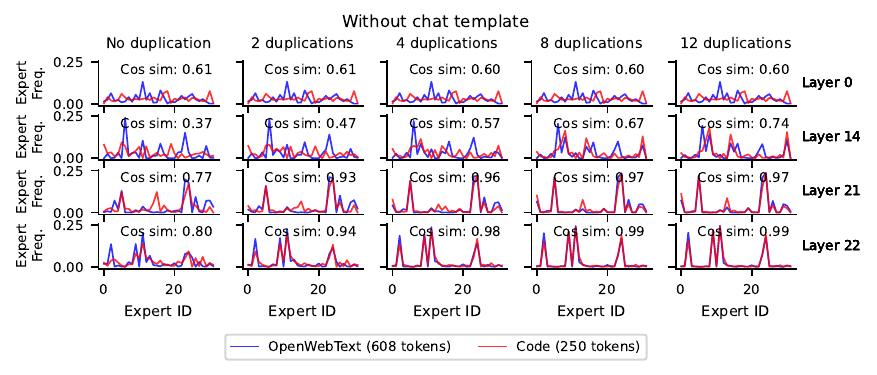}
    \includegraphics[width=\linewidth]{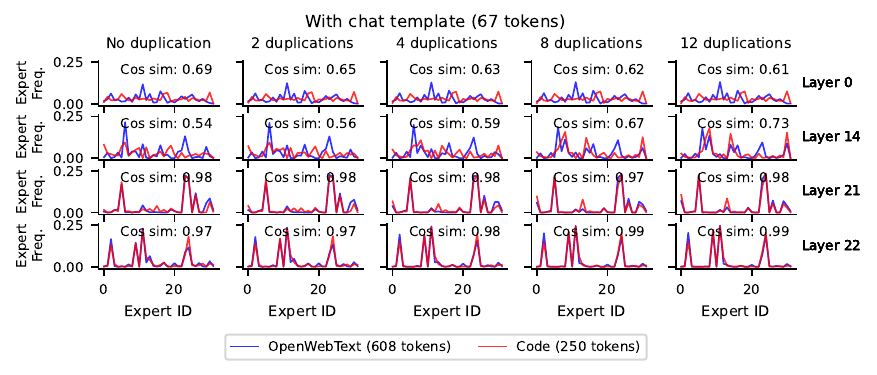}
    \caption{Comparison of expert usage under duplication for gpt-oss-20b with (bottom) and without (top) chat template on two pieces of input.}
    \label{fig:gpt_oss_duplication_with_chat_template}
\end{figure}

\begin{figure}
    \centering
    \includegraphics[width=\linewidth]{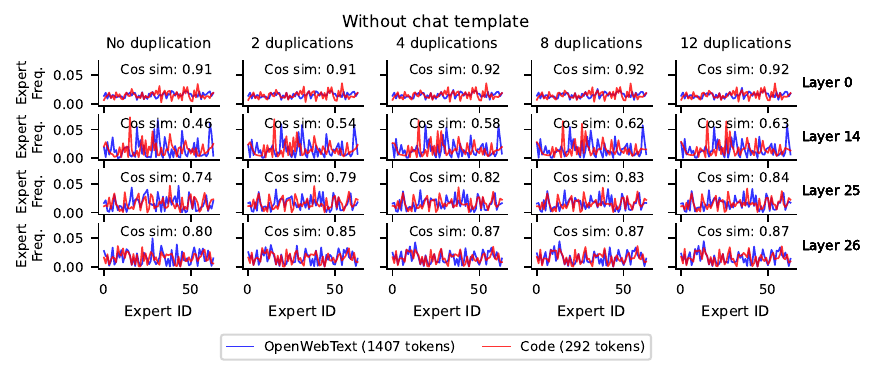}
    \includegraphics[width=\linewidth]{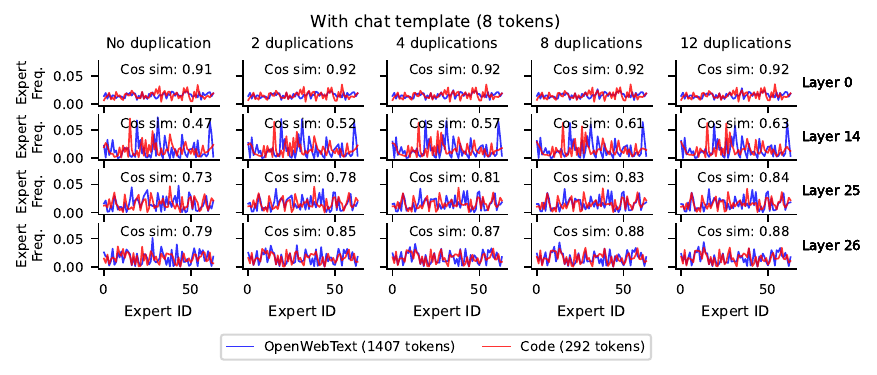}
    \caption{Comparison of expert usage under duplication for ERNIE-4.5-21B-A3B with (bottom) and without (top) chat template on two pieces of input.}
    \label{fig:ernie_duplication_with_chat_template}
\end{figure}

\begin{figure}[!h]
    \centering
    \includegraphics[width=0.6\linewidth]{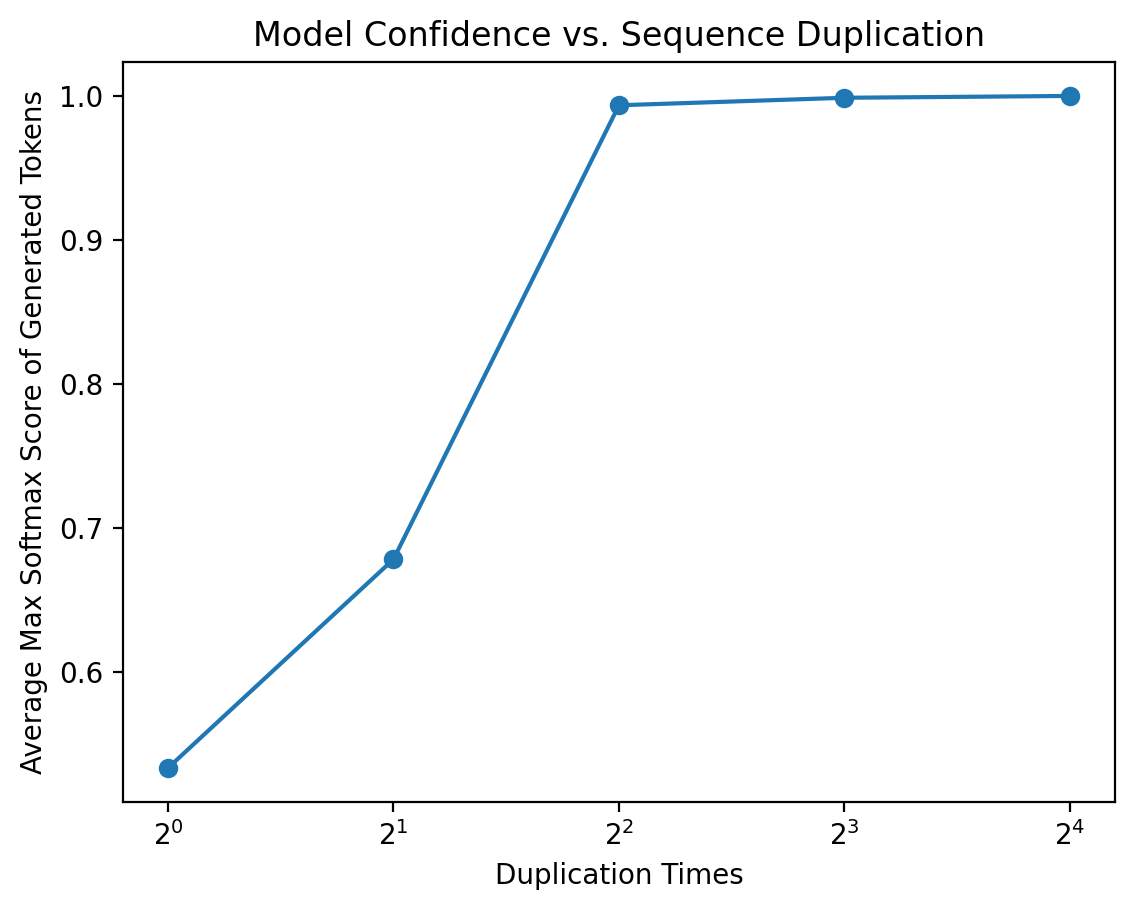}
    \caption{Duplication increases logit magnitude, increasing model confidence and causes random decoding to collapse.}
\end{figure}

% \section{Old results}

% \begin{figure}[!t]
% \captionsetup{justification=centering}
% \centering
%  \begin{subfigure}[t]{0.32\textwidth}
%     \centering
%      \includegraphics[height=3.6cm]{figures/same_token_different_context_gpt_oss_medical_vs_daily.pdf}
%      \caption{Same token,\\different context.}
%     \end{subfigure}
%     \begin{subfigure}[t]{0.32\textwidth}
%     \centering
%      \includegraphics[height=3.6cm]{figures/coffee_description_different_language.pdf}
%      \caption{Same paragraph,\\different language}
%     \end{subfigure}
%  \begin{subfigure}[t]{0.33\textwidth}
%     \centering
%      \includegraphics[height=3.5cm]{figures/different_token_same_meaning_gpt_oss_diffusion.pdf}
%      \caption{Same meaning,\\different expressions}
%     \end{subfigure}
%     \caption{Same sentence, different context \xw{Look at output} \xw{Look at last few tokens} Note that we expect the patterns here, especially ones in later layers, to be model dependent}
% \end{figure}

\end{document}